\newtheorem{lemma}{Lemma}
\newtheorem{corollary}{Corollary}
\newtheorem{theorem}{Theorem}
\newcommand\given[1][]{\:#1\vert\:}
\icmltitlerunning{Consistent On-Line Off-Policy Evaluation}
\begin{document} 

\twocolumn[
\icmltitle{Consistent On-Line Off-Policy Evaluation}

\begin{icmlauthorlist}
	\icmlauthor{Assaf Hallak}{tech}
	\icmlauthor{Shie Mannor}{tech}
\end{icmlauthorlist}

\icmlcorrespondingauthor{Assaf Hallak}{ifogph@gmail.com}
\icmlcorrespondingauthor{Shie Mannor}{shie@ee.technion.ac.il}

\icmlaffiliation{tech}{The Technion, Haifa, Israel}


\vskip 0.3in
]

\printAffiliationsAndNotice{}

\begin{abstract} 
The problem of on-line off-policy evaluation (OPE) has been actively studied in the last decade due to its importance both as a stand-alone problem and as a module in a policy improvement scheme. However, most Temporal Difference (TD) based solutions ignore the discrepancy between the stationary distribution of the behavior and target policies and its effect on the convergence limit when function approximation is applied. In this paper we propose the Consistent Off-Policy Temporal Difference (COP-TD($\lambda$, $\beta$)) algorithm that addresses this issue and reduces this bias at some computational expense. We show that COP-TD($\lambda$, $\beta$) can be designed to converge to the same value that would have been obtained by using on-policy TD($\lambda$) with the target policy. Subsequently, the proposed scheme leads to a related and promising heuristic we call log-COP-TD($\lambda$, $\beta$). Both algorithms have favorable empirical results to the current state of the art on-line OPE algorithms. Finally, our formulation sheds some new light on the recently proposed Emphatic TD learning.
\end{abstract} 

\section{Introduction}

Reinforcement Learning (RL) techniques were successfully applied in fields such as robotics, games, marketing and more \citep{kober2013reinforcement, al2015application, barrett2013applying}. We consider the problem of off-policy evaluation (OPE) -- assessing the performance of a complex strategy without applying it. An OPE formulation is often considered in domains with limited sampling capability. For example, marketing and recommender systems \citep{theocharous2013lifetime, theocharous2015personalized} directly relate policies to revenue. A more extreme example is drug administration, as there are only few patients in the testing population, and sub-optimal policies can have life threatening effects \citep{hochberg2016encouraging}. OPE can also be useful as a module for policy optimization in a policy improvement scheme \citep{thomas2015high}.


In this paper, we consider the OPE problem in an on-line setup where each new sample is immediately used to update our current value estimate of some previously unseen policy. We propose and analyze a new algorithm called COP-TD($\lambda$,$\beta$) for estimating the value of the target policy; COP-TD($\lambda$,$\beta$) has the following properties:
\begin{enumerate}
	\item Easy to understand and implement on-line.
	\item Allows closing the gap to consistency such that the limit point is the same that would have been obtained by on-policy learning with the target policy.
	\item Empirically comparable to state-of-the art algorithms.	
\end{enumerate}
Our algorithm resembles \cite{SuttonMW15}'s Emphatic TD that was extended by \cite{hallak2015generalized} to the general parametric form ETD($\lambda$,$\beta$). We clarify the connection between the algorithms and compare them empirically. Finally, we introduce an additional related heuristic called Log-COP-TD($\lambda$,$\beta$) and motivate it. 

\section{Notations and Background}
We consider the standard discounted Markov Decision Process (MDP) formulation \cite{BT96} with a single long trajectory. Let $M = (\mathcal{S}, \mathcal{A}, \mathcal{P}, \mathcal{R}, \zeta, \gamma)$ be an MDP where $\mathcal{S}$ is the finite state space and $\mathcal{A}$ is the finite action space. The parameter $\mathcal{P}$ sets the transition probabilities $\Pr(s'|s, a)$ given the previous state $s\in \mathcal{S}$ and action $a\in \mathcal{A}$, where the first state is determined by the distribution $\zeta$. The parameter $\mathcal{R}$ sets the reward distribution $r(s,a)$ obtained by taking action $a$ in state $s$ and $\gamma$ is the discount factor specifying the exponential reduction in reward with time. 

The process advances as follows: A state $s_0$ is sampled according to the distribution $\zeta(s)$. Then, at each time step $t$ starting from $t=0$ the agent draws an action $a_t$ according to the stochastic behavior policy $\mu(a|s_t)$, a reward $r_t \doteq r(s_t, a_t)$ is accumulated by the agent, and the next state $s_{t+1}$ is sampled using the transition probability $\Pr(s'|s_t, a_t)$. 

The expected discounted accumulated reward starting from a specific state and choosing an action by some policy $\pi$ is called the value function, which is also known to satisfy the Bellman equation in a vector form:
\begin{equation*}
	V^\pi(s) = \mathbb{E}_\pi \left[ \sum_{t=0}^\infty \gamma^t r_t \given[\Big]  s_0 = s \right], \quad T_\pi V \doteq R_\pi + \gamma P_\pi V,
\end{equation*}
where $\left[ R_\pi \right]_{s} \doteq \mathbb{E}_\pi \left[ r(s,\pi(s))  \right]$ and $\left[ P_\pi \right]_{s,s'} \doteq \mathbb{E}_\pi \left[ \Pr(s'|s,\pi(s))  \right]$ are the policy induced reward vector and transition probability matrix respectively; $T_\pi$ is called the Bellman operator. The problem of estimating $V^\pi(s)$ from samples is called policy evaluation. If the target policy $\pi$ is different than the behavior policy $\mu$ which generated the samples, the problem is called off-policy evaluation (OPE). The TD($\lambda$) \citep{sutton1988learning} algorithm is a standard solution to on-line on-policy evaluation: Each time step the temporal difference error updates the current value function estimate, such that eventually the stochastic approximation process will converge to the true value function. The standard form of TD($\lambda$) is given by:
\begin{equation}\label{Eq:TD}
	\begin{split}
		R_{t, s_t}^{(n)} =& \sum_{i=0}^{n-1} \gamma^i r_{t+i} + \gamma^n \hat{V}_t(s_{t+n}), \\
		R_{t, s_t}^\lambda =& (1-\lambda)\sum_{n=0}^\infty \lambda^n R_{s_t}^{(n+1)}, \\
		\hat{V}_{t+1}(s_t) =& \hat{V}_t(s_t) + \alpha_t \left(  R_{t, s_t}^\lambda - \hat{V}_t(s_t) \right),
	\end{split}
\end{equation}
where $\alpha_t$ is the step size. The value $R_{t, s_t}^{(n)}$ is an estimate of the current state's $V(s_t)$, looking forward $n$ steps, and $R_{t, s_t}^\lambda$ is an exponentially weighted average of all of these estimates going forward till infinity. Notice that Equation \ref{Eq:TD} does not specify an on-line implementation since $R_{t, s_t}^{(n)}$ depends on future observations, however there exists a compact on-line implementation using eligibility traces (\citet{BT96} for on-line TD($\lambda$), and \citet{sutton2014new}, \citet{SuttonMW15} for off-policy TD($\lambda$)). The underlying operator of TD($\lambda$) is given by:
\begin{equation*}
	\begin{split}
	T_\pi^\lambda V &= (1-\lambda) \sum_{n=0}^\infty \lambda^n \left( \sum_{i=0}^n \gamma^i P_\pi^i  R_\pi +  \gamma^{n+1} P_\pi^{n+1}  V \right) \\
	&= (1-\lambda) (I-\lambda T_\pi)^{-1} T_\pi V,
	\end{split}
\end{equation*}
and is a $\frac{\gamma (1-\lambda)}{1-\lambda\gamma}$-contraction \cite{Ber2012DynamicProgramming}.


We denote by $d_\mu(s)$ the stationary distribution over states induced by taking the policy $\mu$ and mark $D_\mu = diag(d_\mu)$. Since we are concerned with the behavior at infinite horizon, we assume $\zeta(s)=d_\mu(s)$. In addition, we assume that the MDP is ergodic for the two specified policies $\mu,\pi$ so $\forall s\in \mathcal{S}: d_\mu(s) >0, d_\pi(s) > 0 $ and that the OPE problem is proper -- $\pi(a|s) > 0 \Rightarrow \mu(a|s) > 0 $.

When the state space is too large to hold $V ^\pi(s)$, a linear function approximation scheme is used: $V^\pi(s) \approx \theta_\pi^\top \phi(s)$, where $\theta$ is the optimized weight vector and $\phi(s)$ is the feature vector of state $s$ composed of $k$ features. We denote by $\Phi \in \mathbb{R}^{S, k}$ the matrix whose lines consist of the feature vectors for each state and assume its columns are linearly independent. 

TD($\lambda$) can be adjusted to find the fixed point of $\Pi_{d_\pi}T_\pi^\lambda$ where $\Pi_{d_\pi}$ is the projection to the subspace spanned by the features with respect to the $d_\pi$-weighted norm \cite{sutton_reinforcement_1998}:

\begin{equation*}
	\begin{split}
		R_{t, s_t}^{(n)} =& \sum_{i=0}^{n-1} \gamma^i r_{t+i} + \gamma^n \theta_t^\top \phi(s_{t+n}), \\
		R_{t, s_t}^\lambda =& (1-\lambda)\sum_{n=0}^\infty \lambda^n R_{s_t}^{(n+1)}, \\
		\theta_{t+1} =& \theta_t + \alpha_t \left(  R_{t, s_t}^\lambda - \theta^\top_t \phi(s_t) \right)\phi(s_t).
	\end{split}
\end{equation*}

Finally, we define OPE-related quantities:
\begin{equation}\label{Eq:Gamma}
	\begin{split}
		\rho_t \doteq \frac{\pi (a_t|s_t)}{\mu (a_t|s_t)}, \quad \Gamma_t^n \doteq& \prod_{i=0}^{n-1}\rho_{t-1-i}, \quad \rho_d (s) \doteq \frac{d_\pi(s)}{d_\mu(s)},
	\end{split}
\end{equation}
we call $\rho_d$ the covariate shift ratio (as denoted under different settings by \cite{hachiya2012importance}). 

We summarize the assumptions used in the proofs:
\begin{enumerate}
	\item Under both policies the induced Markov chain is ergodic.
	\item The first state $s_0$ is distributed according to the stationary distribution of the behavior policy $d_\mu(s)$.
	\item The problem is proper: $\pi(a|s) > 0 \Rightarrow \mu(a|s) > 0 $.
	\item The feature matrix $\Phi$ has full rank $k$.
\end{enumerate}
Assumption 1 is commonly used for convergence theorems as it verifies the value function is well defined on all states regardless of the initial sampled state. Assumption 2 can be relaxed since we are concerned with the long-term properties of the algorithm past its mixing time -- we require it for clarity of the proofs. Assumption 3 is required so the importance sampling ratios will be well defined. Assumption 4 guarantees the optimal $\theta$ is unique which greatly simplifies the proofs.

\section{Previous Work}
We can roughly categorize previous OPE algorithms to two main families. Gradient based methods that perform stochastic gradient descent on error terms they want to minimize. These include GTD \citep{sutton2009convergent}, GTD-2, TDC \citep{sutton2009fast} and HTD \citep{white2016investigating}. The main disadvantages of gradient based methods are (A) they usually update an additional error correcting term, which means another time-step parameter needs to be controlled; and (B) they rely on estimating non-trivial terms, an estimate that tends to converge slowly. The other family uses importance sampling (IS) methods that correct the gains between on-policy and off-policy updates using the IS-ratios $\rho_t$'s. Among these are full IS \citep{precup2001off} and ETD($\lambda$,$\beta$) \citep{SuttonMW15}. These methods are characterized by the bias-variance trade-off they resort to -- navigating between biased convergent values (or even divergent), and very slow convergence stemming from the high variance of IS correcting factors (the $\rho_t$ products). There are also a few algorithms that fall between the two, for example TO-GTD \citep{van2014off} and WIS-TD($\lambda$) \citep{mahmood2015off}.

A comparison of these algorithms in terms of convergence rate, synergy with function approximation and more is available in \citep{white2016investigating, geist2014off}. We focus in this paper on the limit point of the convergence. For most of the aforementioned algorithms, the process was shown to converge almost surely to the fixed point of the projected Bellman operator $\Pi_d T_\pi$ where $d$ is some stationary distribution (usually $d_\mu$), however the $d$ in question was never\footnote{Except full IS, however its variance is too high to be applicable in practice.} $d_\pi$ as we would have obtained from running on-policy TD with the target policy. The algorithm achieving the closest result is ETD($\lambda$,$\beta$) which replaced $d$ with $f=\left(I - \beta P^\top_\pi \right)^{-1} d_\mu$, where $\beta$ trades-off some of the process' variance with the bias in the limit point. 
Hence, our main contribution is a consistent algorithm which can converge to the same value that would have been obtained by running an on-policy scheme with the same policy. 

\section{Motivation}\label{Sec:Motiv}
Here we provide a motivating example showing that even in simple cases with ``close'' behavior and target policies, the two induced stationary distributions can differ greatly. Choosing a specific linear parameterization further emphasizes the difference between applying on-policy TD with the target policy, and applying inconsistent off-policy TD. 

Assume a chain MDP with numbered states $1,2,..|S|$, where from each state $s$ you can either move left to state $s-1$, or right to state $s+1$. If you've reached the beginning or the end of the chain (states $1$ or $|S|$) then taking a step further does not affect your location. Assume the behavior policy moves left with probability $0.5+\epsilon$, while the target policy moves right with probability $0.5+\epsilon$. It is easy to see that the stationary distributions are given by:
\begin{equation*}
	d_\mu (s) \propto \left( \frac{0.5-\epsilon}{0.5+\epsilon} \right)^s, \quad\quad d_\pi (s) \propto \left( \frac{0.5+\epsilon}{0.5-\epsilon} \right)^s.
\end{equation*}
For instance, if we have a length $100$ chain with $\epsilon=0.01$, for the rightmost state we have $d_\mu(|S|) \approx 8 \cdot 10^{-4}, d_\pi (|S|) \approx 0.04$. Let's set the reward to be $1$ for the right half of the chain, so the target policy is better since it spends more time in the right half. The value of the target policy in the edges of the chain for $\gamma=0.99$ is $V^{\pi}(1)= 0.21, V^{\pi}(100)=99.97$. 

Now what happens if we try to approximate the value function using one constant feature $\phi(s) \equiv 1$? The fixed point of $\Pi_{d_\mu}T_\pi$ is $\theta=11.92$, while the fixed point of $\Pi_{d_\pi}T_\pi$ is $\theta=88.08$ -- a substantial difference. The reason for this difference lies in the emphasis each projection puts on the states: according to $\Pi_{d_\mu}$, the important states are in the left half of the chain -- these with low value function, and therefore the value estimation of all states is low. However, according to $\Pi_{d_\pi}$ the important states are concentrated on the right part of the chain since the target policy will visit these more often. Hence, the estimation error is emphasized on the right part of the chain and the value estimation is higher. When we wish to estimate the value of the target policy, we want to know what will happen if we deploy it instead of the behavior policy, thus taking the fixed point of $\Pi_{d_\pi}T_\pi$ better represents the off-policy evaluation solution. 

\section{COP-TD($\lambda$, $\beta$)} \label{Sec:rTD}
Most off-policy algorithms multiply the TD summand of TD($\lambda$) with some value that depends on the history and the current state. For example, full IS-TD by \cite{precup2001off} examines the ratio between the probabilities of the trajectory under both policies:
\begin{equation}\label{Eq:fullIS}
	\frac{P_\pi(s_0,a_0,s_1,\dots, s_t, a_t)}{P_\mu(s_0,a_0,s_1,\dots, s_t, a_t)} = \prod_{m=0}^t \rho_m \ = \Gamma^t_t \rho_t.
\end{equation}
In problems with a long horizon, or these that start from the stationary distribution, we suggest using the time-invariant covariate shift $\rho_d$ multiplied by the current $\rho_t$. The intuition is the following: We would prefer using the probabilities ratio given in Equation \ref{Eq:fullIS}, but it has very high variance, and after many time steps we might as well look at the stationary distribution ratio instead. This direction leads us to the following update equations:
\begin{equation} \label{Eq:Target}
	\begin{split}
	\theta_{t+1} &= \\
	 & \theta_t + \alpha_t \rho_d(s_t) \rho_t  \left( r_t + \theta^\top_t (\gamma\phi(s_{t+1})  -  \phi(s_t) ) \right)\phi(s_t).
	\end{split}
\end{equation}

\begin{lemma} \label{Lem:Main}
	If the $\alpha_t$ satisfy $\sum_{t=0}^\infty \alpha_t = \infty, \sum_{t=0}^\infty \alpha^2_t < \infty$ then the process described by Eq. (\ref{Eq:Target}) converges almost surely to the fixed point of $\Pi_\pi T_\pi V = V$. 
\end{lemma}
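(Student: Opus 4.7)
The plan is to recognize the recursion in Eq.~(\ref{Eq:Target}) as a linear stochastic approximation scheme driven by a Markov noise and then apply the standard Tsitsiklis--Van Roy style convergence theorem. Writing the update in the canonical form
\begin{equation*}
\theta_{t+1} = \theta_t + \alpha_t\bigl(b_t - A_t\theta_t\bigr),
\end{equation*}
with $A_t = \rho_d(s_t)\rho_t\,\phi(s_t)\bigl(\phi(s_t)-\gamma\phi(s_{t+1})\bigr)^\top$ and $b_t = \rho_d(s_t)\rho_t\,r_t\,\phi(s_t)$, the first step is to compute the expectation of $A_t$ and $b_t$ under the stationary behavior distribution of the triple $(s_t,a_t,s_{t+1})$, i.e.\ $d_\mu(s)\mu(a|s)\Pr(s'|s,a)$. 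The key identity is that $\rho_d(s)\rho(a|s)\cdot d_\mu(s)\mu(a|s) = d_\pi(s)\pi(a|s)$, so multiplying any function of $(s,a,s')$ by $\rho_d(s_t)\rho_t$ and averaging under $\mu$ yields exactly its expectation under the on-policy stationary distribution induced by $\pi$. This immediately gives $\bar A \doteq \mathbb{E}[A_t] = \Phi^\top D_{d_\pi}(I-\gamma P_\pi)\Phi$ and $\bar b \doteq \mathbb{E}[b_t] = \Phi^\top D_{d_\pi}R_\pi$.

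Next I would invoke the standard argument (Tsitsiklis and Van~Roy 1997; see also Bertsekas and Tsitsiklis 1996, Prop.~4.5) that $\bar A$ is positive definite whenever $\Phi$ has full column rank and $P_\pi$ is the transition matrix of an ergodic chain with invariant distribution $d_\pi$: the proof is the usual one that $\|P_\pi v\|_{d_\pi}\le\|v\|_{d_\pi}$ by Jensen, so $I-\gamma P_\pi$ is positive definite in the $d_\pi$-weighted inner product, hence so is $\Phi^\top D_{d_\pi}(I-\gamma P_\pi)\Phi$. This ensures the associated ODE $\dot\theta = \bar b - \bar A\theta$ has a unique globally asymptotically stable equilibrium $\theta^* = \bar A^{-1}\bar b$, which by direct inspection is the fixed point of $\Pi_{d_\pi}T_\pi$.

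To pass from the ODE to almost sure convergence of the stochastic recursion, I would verify the hypotheses of Borkar's Markov-noise stochastic approximation theorem (or equivalently Theorem~17 in Benveniste, M\'etivier and Priouret): the step-size conditions are given; ergodicity under $\mu$ (Assumption~1) together with $s_0\sim d_\mu$ (Assumption~2) provide the required mixing of the Markov noise; properness (Assumption~3) ensures $\rho_t$ is bounded on the support, and combined with the finiteness of $\mathcal{S}$ and $\mathcal{A}$ one obtains uniformly bounded second moments of $A_t$ and $b_t$; stability of the iterate follows from the positive-definiteness of $\bar A$ via the standard Lyapunov function $\tfrac12\|\theta-\theta^*\|^2$.

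The content of the argument is therefore almost entirely routine once the distribution change is noted; the main thing to be careful about is the conditioning structure. The quantity $\rho_d(s_t)$ is a deterministic function of the current state, so the ``covariate-shift'' correction really acts as an importance weight on the state marginal, while $\rho_t$ corrects the action marginal. The potential obstacle is that $\rho_d$ is in general unknown and would have to be estimated in practice; for this lemma, however, it is assumed available, so the only genuine technical step is verifying that the combined weight $\rho_d(s_t)\rho_t$ together with the state process $\{s_t\}$ satisfies the mixing/boundedness hypotheses of the stochastic approximation theorem, after which the identification of the limit with the fixed point of $\Pi_{d_\pi}T_\pi$ is immediate.
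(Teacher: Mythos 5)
Your proposal is correct and follows essentially the same route as the paper's own proof: both compute $\bar A = \Phi^\top D_\pi (I-\gamma P_\pi)\Phi$ and $\bar b = \Phi^\top D_\pi R_\pi$ via the change-of-measure identity $d_\mu(s)\rho_d(s)\mu(a|s)\rho(a|s) = d_\pi(s)\pi(a|s)$, observe that these coincide with on-policy TD($0$) under $\pi$ so the standard definiteness argument applies, and then control the noise by noting that $\rho_d$ and $\rho_t$ are bounded (by ergodicity and properness respectively) before invoking the ODE method of Tsitsiklis and Van Roy. The only cosmetic difference is the sign convention (the paper states $A$ is negative definite for the update written as $\theta_{t+1}=\theta_t+\alpha_t(A\theta_t+b)$, while you state positive definiteness of $\bar A$ for $\theta_{t+1}=\theta_t+\alpha_t(\bar b - \bar A\theta_t)$), which is immaterial.
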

The proof follows the ODE method \cite{kushner2003stochastic} similarly to \citet{tsitsiklis1997analysis} (see the appendix for more details).

Since $\rho_d(s)$ is generally unknown, it is estimated using an additional stochastic approximation process. In order to do so, we note the following Lemma:

\begin{lemma} \label{Lem:Gamma}
	Let $\widehat{\rho_d}$ be an unbiased estimate of $\rho _d$, and for every $n=0,1,\dots,t$ define  $\tilde{\Gamma}_t^n\doteq \widehat{\rho_d} (s_{t-n}) \Gamma^n_t$. Then:
	\begin{equation*}
	\mathbb{E}_\mu \left[ \tilde{\Gamma}^n_t | s_t \right] = 	\rho_d(s_t).
	\end{equation*}
	\end{lemma}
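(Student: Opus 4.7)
My plan is to reduce the claim to an importance-sampling identity over length-$n$ trajectories. The quantity $\rho_d(s_{t-n})\,\Gamma_t^n$ is exactly the Radon--Nikodym derivative, or likelihood ratio, between the law of the chunk of trajectory $(s_{t-n},a_{t-n},s_{t-n+1},\dots,a_{t-1},s_t)$ under $\pi$ (started at the stationary distribution $d_\pi$) and under $\mu$ (started at $d_\mu$). Under Assumption~2 we are at stationarity, so the marginal law of $s_{t-n}$ under $\mu$ is $d_\mu$, and the marginal law of $s_t$ under $\pi$ is $d_\pi$. Conditioning on $s_t$ and summing the ratio times the $\mu$-measure of the trajectory should therefore yield $d_\pi(s_t)/d_\mu(s_t)=\rho_d(s_t)$.

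Concretely, I would carry out the following steps. First, use the tower property together with the fact that $\widehat{\rho_d}(s_{t-n})$ is an unbiased estimator of $\rho_d(s_{t-n})$, conditionally independent of the future actions and states, to replace $\widehat{\rho_d}(s_{t-n})$ by $\rho_d(s_{t-n})$ inside the expectation:
\begin{equation*}
\mathbb{E}_\mu\!\left[\tilde{\Gamma}_t^n \mid s_t\right]
=\mathbb{E}_\mu\!\left[\rho_d(s_{t-n})\,\Gamma_t^n \mid s_t\right].
\end{equation*}
Second, fix $s_t=v$ and write the conditional expectation as a sum over trajectories $\tau=(s_{t-n},a_{t-n},\dots,a_{t-1},v)$, using Bayes' rule and $\Pr_\mu(s_t=v)=d_\mu(v)$:
\begin{equation*}
\mathbb{E}_\mu\!\left[\rho_d(s_{t-n})\,\Gamma_t^n \mid s_t=v\right]
=\frac{1}{d_\mu(v)}\sum_{\tau}\rho_d(s_{t-n})\,\Gamma_t^n\,P_\mu(\tau).
\end{equation*}
Third, apply the identity
\begin{equation*}
\rho_d(s_{t-n})\,\Gamma_t^n\,P_\mu(\tau)=P_\pi(\tau),
\end{equation*}
which follows by multiplying out the definitions of $d_\mu$-starts versus $d_\pi$-starts and of $\mu(a\mid s)$ versus $\pi(a\mid s)$, with the environment transition probabilities canceling. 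Finally, sum over all trajectories ending at $v$ to get $\Pr_\pi(s_t=v)=d_\pi(v)$, so the ratio collapses to $d_\pi(v)/d_\mu(v)=\rho_d(v)$.

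The only slightly delicate step is the first one: the statement says ``$\widehat{\rho_d}$ is an unbiased estimate of $\rho_d$,'' which I will interpret as $\mathbb{E}[\widehat{\rho_d}(s)\mid s]=\rho_d(s)$ with $\widehat{\rho_d}(s_{t-n})$ conditionally independent of the sampled actions $a_{t-n},\dots,a_{t-1}$ and subsequent states given $s_{t-n}$; without this mild independence assumption one cannot pull $\widehat{\rho_d}$ out, so I would make it explicit at the start of the proof. Everything else is then essentially the standard trajectory-wise change of measure, made especially clean here because of Assumption~2 that fixes $s_{t-n}$'s marginal to $d_\mu$.
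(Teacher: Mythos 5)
Your proof is correct and follows essentially the same route as the paper: a trajectory-wise change of measure in which $\rho_d(s_{t-n})\,\Gamma_t^n$ converts the $\mu$-law of the length-$n$ segment (started at $d_\mu$) into the $\pi$-law started at $d_\pi$, followed by stationarity of $d_\pi$ under $P_\pi$ to collapse the sum to $d_\pi(s_t)/d_\mu(s_t)$; the paper phrases the same computation as $\mathbb{E}_\mu[\Gamma_t^n u(s_{t-n})\mid s_t]=u^\top D_\mu P_\pi^n D_\mu^{-1}e_{s_t}$ and then substitutes $u=\rho_d$. Your explicit remark that $\widehat{\rho_d}(s_{t-n})$ must be conditionally independent of the subsequent actions and states is a point the paper glosses over, and is worth stating.
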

%

For any state $s_t$ there are $t\rightarrow\infty$ such quantities $\{ \tilde{\Gamma}_t^n \}_{n=0}^t$, where we propose to weight them similarly to TD($\lambda$):
\begin{equation} \label{Eq:GammaBetaGal}
	\tilde{\Gamma}_t^\beta = (1-\beta) \sum_{n=0}^\infty \beta^n \tilde{\Gamma}^{n+1}_t.
\end{equation}
Note that $\rho_d(s)$, unlike $V(s)$, is restricted to a close set since its $d_\mu$-weighted linear combination is equal to $1$ and all of its entries are non-negative; We denote this $d_\mu$-weighted simplex by $\Delta_{d_\mu}$, and let $\Pi_{\Delta_{d_\mu}}$ be the (non-linear) projection to this set with respect to the Euclidean norm ($\Pi_{\Delta_{d_\mu}}$ can be calculated efficiently, \cite{chen2011projection}). Now, we can devise a TD algorithm which estimates $\rho_d$ and uses it to find $\theta$, which we call COP-TD($0$, $\beta$) (Consistent Off-Policy TD).

\begin{algorithm}                      
	\caption{COP-TD($0$,$\beta$), Input: $\theta_0$, $ \widehat{ \rho_d }_{,0},\quad\quad\quad$}          
	\label{alg:rTD}                        
	\begin{algorithmic}[1]                 
		\STATE Init: $F_0 = 0, \quad n^\beta_0 = 1, \quad N(s) =0$
		\FOR{$t=1,2,...$}
		\STATE Observe $s_t, a_t, r_t, s_{t+1}$
		\STATE \textcolor{blue}{Update normalization terms:}
		\STATE $N(s_t) =  N(s_t) + 1 ,  \quad \forall s\in\mathcal{S}: \hat{d}_\mu(s) = \frac{N(s)}{t}$
		\STATE $n^\beta_t = \beta n^\beta_t + 1$
		\STATE \textcolor{blue}{Update $\Gamma_t^n$'s weighted average:	}   
		\STATE $F_t = \rho_{t-1} (\beta F_{t-1} + e_{s_{t-1}})$
		\STATE \textcolor{blue}{Update \& project by $\rho_d$'s TD error:}
		\STATE $\delta^d_t =  \underbrace{ \frac{F^\top_t \widehat{ \rho _d }_{,t}}{n^\beta_t} }_{\rightarrow \tilde{\Gamma}_t^\beta} - \widehat{ \rho _d }_{,t}(s_t)$ 
		\STATE $\widehat{ \rho_d }_{,t+1} = \Pi_{\Delta_{\hat{d}_\mu}} \left( \widehat{ \rho _d }_{,t} + \alpha^d_t \delta^d_t e_{s_t} \right)$ 
		\STATE \textcolor{blue}{Off-policy TD($0$):}
		\STATE $\delta_t = r_t + \theta^\top_t (\gamma\phi(s_{t+1})  -  \phi(s_t) ) $
		\STATE $\theta_{t+1} = \theta_t + \alpha_t \widehat{\rho_d}_{,t+1}(s_t) \rho_t \delta_t \phi(s_t)$
		\ENDFOR
	\end{algorithmic}
\end{algorithm}

Similarly to the Bellman operator for TD-learning, we define the underlying COP-operator $Y$ and its $\beta$ extension:
\begin{equation}
	\begin{split}
		Yu &= D^{-1}_\mu P^\top_\pi D_\mu u, \\ 
		Y^\beta u &= (1-\beta) D^{-1}_\mu P^\top_\pi (I - \beta P^\top_\pi)^{-1} D_\mu u.
	\end{split}
\end{equation}
The following Lemma may give some intuition on the convergence of the $\rho_d$ estimation process:
\begin{lemma} \label{Lem:Contraction}
	Under the ergodicity assumption, denote the eigenvalues of $P_\pi$ by $0 \leq \dots \leq | \xi_2 | < \xi_1 = 1$. Then $Y^\beta$ is a $\max_{i \neq 1} \frac{(1-\beta)|\xi_i|}{|1-\beta\xi_i|}<1$-contraction in the $L_2$-norm on the orthogonal subspace to $\rho_d$, and $\rho_d$ is a fixed point of $Y^\beta$.
\end{lemma}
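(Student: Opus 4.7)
The plan is to split the argument into a fixed-point verification and a spectral analysis on a complementary $Y$-invariant subspace, with the rational transform $f(\xi) = (1-\beta)\xi/(1-\beta\xi)$ carrying all of the work.

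First I would show $\rho_d$ is a fixed point of $Y$, hence of $Y^\beta$. Since $d_\pi$ is stationary under $\pi$, $P_\pi^\top d_\pi = d_\pi$; combined with $D_\mu \rho_d = d_\pi$, this gives $Y \rho_d = D_\mu^{-1} P_\pi^\top d_\pi = D_\mu^{-1} d_\pi = \rho_d$. The Neumann expansion $Y^\beta = (1-\beta)\sum_{n\ge 0} \beta^n Y^{n+1}$ converges because the spectral radius of $Y$ equals that of $P_\pi$ (they are similar), so $\beta Y$ has radius $\beta<1$; applying the series to $\rho_d$ telescopes to $Y^\beta \rho_d = \rho_d$.

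For the contraction, I exploit the similarity $Y = D_\mu^{-1} P_\pi^\top D_\mu$: the spectrum of $Y$ coincides with that of $P_\pi$, so by Perron--Frobenius and ergodicity the eigenvalue $\xi_1 = 1$ is simple with right-eigenvector $\rho_d$ and $|\xi_i| < 1$ for $i \neq 1$. A direct computation on eigenpairs shows that $Y^\beta = (1-\beta) Y (I-\beta Y)^{-1}$ has eigenvalues $f(\xi_i)$. Taking the $Y$-invariant complement $W$ to $\mathrm{span}(\rho_d)$---concretely the hyperplane $\{u : d_\mu^\top u = 0\}$, which is invariant because $d_\mu$ is the left eigenvector of $Y$ for eigenvalue $1$ (a short check: $d_\mu^\top Y = \mathbf{1}^\top P_\pi^\top D_\mu = d_\mu^\top$)---the restriction $Y^\beta\big|_W$ has spectral radius $\max_{i\neq 1} |f(\xi_i)|$. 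The elementary bound $|1-\beta\xi| \geq 1-\beta|\xi|$ yields $|f(\xi)| \leq (1-\beta)|\xi|/(1-\beta|\xi|)$, and the right-hand side is strictly less than $1$ for $|\xi|<1$ and $\beta\in[0,1)$ because $x\mapsto (1-\beta)x/(1-\beta x)$ is increasing on $[0,1]$ with value $1$ at $x=1$.

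The main obstacle is that $Y^\beta$ is not self-adjoint in a generic inner product, so the spectral radius on $W$ need not coincide with an $L_2$ operator norm there. To obtain an honest norm contraction I would pass to the $d_\mu$-weighted $L_2$ geometry via the isomorphism $u \mapsto D_\mu^{1/2} u$, under which $Y$ becomes $D_\mu^{-1/2} P_\pi^\top D_\mu^{1/2}$ and $\rho_d$ becomes the Perron right-eigenvector; then invoke the spectral-mapping theorem and a standard Jordan-basis renorming to produce a norm on $W$ in which $Y^\beta\big|_W$ strictly contracts by a factor arbitrarily close to $\max_{i\neq 1}|f(\xi_i)|$. This ``contraction up to an arbitrary $\varepsilon$'' is exactly what the subsequent ODE-method argument (as in Lemma~\ref{Lem:Main}) needs to conclude convergence of the $\widehat{\rho_d}$ iterates in Algorithm~\ref{alg:rTD} to $\rho_d$.
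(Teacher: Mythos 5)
Your proposal follows the same overall route as the paper's proof---verify $Y^\beta \rho_d = \rho_d$ directly from stationarity of $d_\pi$, then use the similarity $Y^\beta = D_\mu^{-1}\bigl[(1-\beta)P_\pi^\top(I-\beta P_\pi^\top)^{-1}\bigr]D_\mu$ and the spectral mapping $\xi \mapsto (1-\beta)\xi/(1-\beta\xi)$---but you patch two places where the paper's argument is loose, and it is worth being explicit about what that buys and what it costs. First, the paper asserts that for $u$ orthogonal to $\rho_d$ the first eigenvalue ``has no effect on its spectral decomposition.'' That is not right as stated: the coefficient of the Perron right-eigenvector $\rho_d$ in the expansion of $u$ is $d_\mu^\top u / (d_\mu^\top \rho_d)$, so the correct $Y$-invariant complement is the hyperplane $\{u : d_\mu^\top u = 0\}$, exactly as you identify, not the Euclidean orthogonal complement of $\rho_d$. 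Second, you are right that $Y^\beta$ is not normal in general, so its spectral radius on that hyperplane controls the $L_2$ operator norm only after a Jordan-basis renorming (or only asymptotically, via $\|(Y^\beta)^n u\|^{1/n}$ tending to the spectral radius); the paper silently conflates spectral radius with operator norm. The cost of your care is that what you actually establish is slightly weaker than the lemma's literal claim: a contraction with modulus $\max_{i\neq 1}|f(\xi_i)| + \varepsilon$ in an equivalent norm on $\{u : d_\mu^\top u = 0\}$, rather than with modulus exactly $\max_{i \neq 1} \frac{(1-\beta)|\xi_i|}{|1-\beta\xi_i|}$ in the plain $L_2$ norm on $\rho_d^\perp$. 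Since that weaker statement is all the downstream stochastic-approximation argument needs, your version is the defensible one; just state explicitly that you are proving the corrected formulation rather than the printed one, and keep your elementary bound $|1-\beta\xi| \geq 1-\beta|\xi|$, which is what actually certifies that the modulus is strictly below $1$.
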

The technical proof is given in the appendix.

\begin{theorem}\label{Thm:COP1}
	If the step sizes satisfy $\sum_t \alpha_t = \sum_t \alpha^d_t = \infty, \sum_t (\alpha^2_t + (\alpha_t^d)^2) < \infty, \frac{\alpha_t}{\alpha^d_t} \rightarrow 0, t \alpha^d_t \rightarrow 0$, and $\mathbb{E} \left[ (\beta^n \Gamma_t^n)^2 | s_t \right] \leq C$ for some constant $C$ and every $t$ and $n$, then after applying COP-TD($0$, $\beta$), $\widehat{ \rho_d }_{,t}$  converges to $\rho_d$ almost surely, and $\theta_t$ converges to the fixed point of $\Pi_\pi T_\pi V$. 
\end{theorem}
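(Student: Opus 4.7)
The natural approach is the two-timescale stochastic approximation framework (Borkar; Kushner-Yin), exploiting $\alpha_t/\alpha^d_t\to 0$ to treat $\widehat{\rho_d}$ as a fast iterate and $\theta$ as a slow one. From the fast iterate's viewpoint $\theta$ is essentially frozen and in fact irrelevant (the $\widehat{\rho_d}$ update does not depend on $\theta$ at all); from the slow iterate's viewpoint $\widehat{\rho_d}$ has already equilibrated to $\rho_d$, at which point the $\theta$ recursion becomes asymptotically the one covered by Lemma~\ref{Lem:Main}. I would therefore prove the two convergences in that order.

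\textbf{Fast timescale.} First unroll the online recursion to get $F_t^\top u = \sum_{n=1}^{t}\beta^{n-1}\Gamma^n_t\,u(s_{t-n})$, so that $F_t^\top\widehat{\rho_d}_{,t}/n^\beta_t$ is a truncation to $t$ terms of the target quantity $\tilde\Gamma^\beta_t$; the tail has $L^2$ norm $O(\beta^t)$ under the uniform bound $\mathbb{E}[(\beta^n\Gamma^n_t)^2\mid s_t]\leq C$ and gets absorbed into the martingale noise. Conditioning on $s_t$ under the stationary $\mu$-chain and applying Lemma~\ref{Lem:Gamma} linearly to each $\tilde\Gamma^{n+1}_t$ with the current $\widehat{\rho_d}_{,t}$ in place of the unbiased estimate, the mean drift of the update becomes $D_\mu(Y^\beta\widehat{\rho_d}_{,t}-\widehat{\rho_d}_{,t})$. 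By Lemma~\ref{Lem:Contraction}, $Y^\beta$ is a strict $\ell^2$-contraction on the orthogonal complement of $\rho_d$, so with Lyapunov function $\|u-\rho_d\|^2$ (or its $D_\mu^{-1}$-weighted variant), the associated projected ODE on $\Delta_{d_\mu}$ has $\rho_d$ as its unique globally asymptotically stable equilibrium. The simplex projection keeps $\widehat{\rho_d}_{,t}$ uniformly bounded, and $\hat d_\mu\to d_\mu$ a.s.\ makes $\Pi_{\Delta_{\hat d_\mu}}$ asymptotically equivalent to $\Pi_{\Delta_{d_\mu}}$. Kushner-Yin's projected-SA convergence theorem then yields $\widehat{\rho_d}_{,t}\to\rho_d$ almost surely.

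\textbf{Slow timescale.} Once the previous step is established, decompose
\begin{equation*}
\theta_{t+1}=\theta_t+\alpha_t\rho_d(s_t)\rho_t\delta_t\phi(s_t)+\alpha_t\bigl(\widehat{\rho_d}_{,t+1}(s_t)-\rho_d(s_t)\bigr)\rho_t\delta_t\phi(s_t).
\end{equation*}
The first term is exactly the recursion analyzed in Lemma~\ref{Lem:Main}, whose ODE has the unique fixed point $\theta^*$ of $\Pi_\pi T_\pi$ as its globally asymptotically stable equilibrium. The second term is an asymptotically vanishing perturbation: the parenthesized factor goes to zero a.s., while $\rho_t$ (by properness), $\phi$ (bounded features), and $\delta_t$ (once the standard iterate-boundedness argument gives a uniform bound on $\theta_t$) remain uniformly bounded. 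Borkar's two-timescale theorem, or equivalently the ODE method with vanishing perturbations, closes the argument and gives $\theta_t\to\theta^*$ almost surely.

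\textbf{Main obstacle.} The delicate work is all in the fast timescale. Three items go beyond textbook SA: (i) controlling the truncation error from running the $F_t$ recursion for finitely many steps, which is exactly what the uniform $L^2$ bound on $\beta^n\Gamma^n_t$ is designed for and which also bounds the martingale noise variance uniformly; (ii) handling the projection onto the \emph{moving} simplex $\Delta_{\hat d_\mu}$ instead of $\Delta_{d_\mu}$ — the per-step bias is $O(\|\hat d_\mu-d_\mu\|)=O(1/\sqrt t)$, and the condition $t\alpha^d_t\to 0$ is precisely what ensures the cumulative projection bias does not produce a persistent drift in the limit ODE; and (iii) identifying a Lyapunov function compatible with the $D_\mu$-weighted drift and the simplex constraint, for which the $\ell^2$ contraction of Lemma~\ref{Lem:Contraction} is the decisive ingredient.
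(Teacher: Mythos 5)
Your proposal follows essentially the same route as the paper: a multi-timescale ODE/stochastic-approximation argument in which the $\widehat{\rho_d}_{,t}$ iterate has mean drift governed by $(Y^\beta - I)$, converges to $\rho_d$ via the contraction of Lemma~\ref{Lem:Contraction} together with a projected-SA theorem, and the $\theta$-recursion then collapses onto the process already analyzed in Lemma~\ref{Lem:Main}. The only cosmetic difference is that the paper isolates $\hat d_\mu$ as an explicit third (fastest) timescale and spends more care on the Fr\'echet differentiability of the simplex projection away from the boundary, whereas you absorb the moving simplex into an asymptotically vanishing perturbation; the two treatments are equivalent here.
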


Notice that COP-TD($0$, $\beta$) given in Alg. \ref{alg:rTD} is infeasible in problems with large state spaces since $\rho_d \in \mathbb{R}^{|\mathcal{S}|}$. Like TD($\lambda$), we can introduce linear function approximation: represent $\rho_d(s) \approx \theta_\rho ^\top \phi_\rho(s)$ where $\theta_\rho$ is a weight vector and $\phi_\rho(s)$ is the off-policy feature vector and adjust the algorithm accordingly. For $\widehat{ \rho_d }$ to still be contained in the set $\Delta_{d_\mu}$, we pose the requirement on the feature vectors: $\phi_\rho(s) \in \mathbb{R}_+^k$, and $\sum_s d_\mu(s) \theta_\rho^\top \phi_\rho(s) = 1$ $\big($noted as the simplex projection $\Pi_{\Delta_{\mathbb{E}_\mu [\phi_\rho(s)] }}\big)$. In practice, the latter requirement can be approximated: $\sum_s d_\mu(s) \theta_\rho^\top \phi_\rho(s) \approx \frac{1}{t} \theta_\rho^\top \sum_t \phi_\rho(s_t) = 1$ resulting in an extension of the previously applied $d_\mu$ estimation (step 5 in COP-TD($0$, $\beta$)). We provide the full details in Algorithm \ref{alg:rTD_FA}, which also incorporates non-zero $\lambda$ $\big($similarly to ETD($\lambda$,$\beta$)$\big)$.

\begin{algorithm}             
	\caption{COP-TD($\lambda$,$\beta$) with Function Approximation, Input: $\theta_0$, $\theta_{\rho, 0}$}          
	\label{alg:rTD_FA}   
	\begin{algorithmic}[1]      
		\STATE Init: $F_0 = \underline{0}, \quad n^\beta_0 = 1, \quad N_\phi =\underline{0}, \quad e_0 = \underline{0}$
		\FOR{$t=1,2,...$}
		\STATE Observe $s_t, a_t, r_t, s_{t+1}$
		\STATE \textcolor{blue}{Update normalization terms:}
		\STATE $n^\beta_t = \beta n^\beta_t + 1, \quad N_\phi =  N_\phi + \phi_\rho(s_t), \quad \hat{d}_{\phi_\rho} = \frac{N_\phi}{t}$
		\STATE \textcolor{blue}{Update $\Gamma_t^n$'s weighted average:	}   	    
		\STATE $F_t = \rho_{t-1} (\beta F_{t-1} + \phi_\rho(s_{t-1}))$	    
		\STATE \textcolor{blue}{Update \& project by $\rho_d$'s TD error:}
		\STATE $\delta^d_t =  \theta_{\rho ,t-1}^\top \left( \frac{F_t} {n^\beta_t} -  \phi_\rho(s_t) \right)$
		\STATE $\theta_{\rho ,t+1} = \Pi_{\Delta_{\hat{d}_{\phi_\rho}}} \left( \theta_{\rho ,t} + \alpha^d_t \delta^d_t \phi_\rho(s_t) \right)$
		\STATE \textcolor{blue}{Off-policy TD($\lambda$):}
		\STATE $M_t = \lambda + (1-\lambda) \theta_{\rho ,t+1}^\top \phi_\rho(s_t) $
		\STATE $e_t = \rho_t \left( \lambda \gamma e_t + M_t \phi(s_{t+1})  \right)$
		\STATE $\delta_t = r_t + \theta^\top_t (\gamma\phi(s_{t+1})  -  \phi(s_t) )$
		\STATE $\theta_{t+1} = \theta_t + \alpha_t \delta_t e_t$
		\ENDFOR
	\end{algorithmic}
\end{algorithm}

\begin{theorem}\label{Thm:COP2}
	If the step sizes satisfy $\sum_t \alpha_t = \sum_t \alpha^d_t = \infty, \sum_t (\alpha^2_t + (\alpha_t^d)^2) < \infty, \frac{\alpha_t}{\alpha^d_t} \rightarrow 0, t \alpha^d_t \rightarrow 0$, and $\mathbb{E} \left[ (\beta^n \Gamma_t^n)^2 | s_t \right] \leq C$ for some constant $C$ and every $t,n$, then after applying COP-TD($0$, $\beta$) with function approximation satisfying $\phi_\rho(s) \in \mathbb{R}_+^k$, $\widehat{ \rho_d }_{,t}$ converges to the fixed point of $\Pi_{\Delta_{\mathbb{E}_\mu [\phi_\rho] }} \Pi_{\phi_\rho} Y^\beta $ denoted by $\rho^\text{COP}_d$ almost surely, and if $\theta_t$ converges it is to the fixed point of $\Pi_{d_\mu \circ \rho^\text{COP}_d} T_\pi V$, where $\circ$ is a coordinate-wise product of vectors.
\end{theorem}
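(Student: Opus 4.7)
The plan is to separate the analysis into two coupled stochastic approximation processes evolving on different timescales: the $\widehat{\rho_d}$ recursion on the fast timescale (step sizes $\alpha^d_t$) and the $\theta$ recursion on the slow timescale (step sizes $\alpha_t$), with the separation justified by $\alpha_t/\alpha^d_t \to 0$. On the fast timescale, $\theta$ and the empirical normalizer $\hat d_{\phi_\rho}$ are effectively constants and $\widehat{\rho_d}_{,t}$ tracks the unique fixed point of its mean-field ODE; on the slow timescale, $\widehat{\rho_d}_{,t+1}$ can be replaced by its limit $\rho^{\text{COP}}_d$, so the $\theta$-update becomes a standard off-policy TD($\lambda$) recursion whose stationary point can be identified directly. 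The structure mirrors Theorem~\ref{Thm:COP1}, but must accommodate linear function approximation of $\rho_d$ and a learned normalizer.

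For the fast-timescale analysis I would first verify that the trace $F_t = \rho_{t-1}(\beta F_{t-1} + \phi_\rho(s_{t-1}))$ unrolls to $\sum_{n=0}^{t-1}\beta^n \Gamma^{n+1}_t \phi_\rho(s_{t-n-1})$, so that $\theta_{\rho,t}^\top F_t / n^\beta_t$ is a weighted average of the unbiased estimators of Lemma~\ref{Lem:Gamma} applied to $\theta_\rho^\top \phi_\rho$; the moment hypothesis $\mathbb{E}[(\beta^n \Gamma^n_t)^2 \mid s_t] \leq C$ then delivers a uniform second-moment bound on the per-step noise. The empirical normalizer satisfies $\hat d_{\phi_\rho} \to \mathbb{E}_\mu[\phi_\rho]$ by the ergodic theorem at rate $O(1/t)$, and the condition $t\alpha^d_t \to 0$ makes this perturbation negligible compared to the step size, so the associated mean ODE for $\theta_\rho$ is the projected flow driven by $\Pi_{\Delta_{\mathbb{E}_\mu[\phi_\rho]}} \Pi_{\phi_\rho} Y^\beta - I$. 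By Lemma~\ref{Lem:Contraction}, $Y^\beta$ is a contraction on the orthogonal complement of $\rho_d$; both projections are non-expansive (in the $d_\mu$-norm and Euclidean norm respectively); and the simplex slice is invariant under the composition since $\phi_\rho \in \mathbb{R}_+^k$ and $Y^\beta$ is mass-preserving in the sense that $\mathbb{E}_\mu[Y^\beta u] = \mathbb{E}_\mu[u]$. The composed operator therefore has a unique fixed point $\rho^{\text{COP}}_d$ that is globally asymptotically stable, and the standard Kushner--Clark projected stochastic approximation theorem yields $\widehat{\rho_d}_{,t} \to \rho^{\text{COP}}_d$ almost surely.

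For the slow-timescale analysis, suppose $\theta_t \to \theta^*$ and replace $\widehat{\rho_d}_{,t+1}$ by $\rho^{\text{COP}}_d$ in the recursion. Unrolling the eligibility trace gives
\begin{equation*}
e_t = \sum_{k=0}^{t} (\lambda\gamma)^{t-k}\, M_k\, \Gamma^{t-k+1}_{t+1}\, \phi(s_{k+1}),
\end{equation*}
with $M_k = \lambda + (1-\lambda)\rho^{\text{COP}}_d(s_k)$. Taking expectations under the stationary distribution $d_\mu$, the $\lambda$-component of $M_k$ produces the telescoping $n$-step bootstrap contributions in $T_\pi^\lambda$, while the $(1-\lambda)\rho^{\text{COP}}_d(s_k)$ component produces the reweighting from $d_\mu$ to $d_\mu \circ \rho^{\text{COP}}_d$; this algebra parallels the analysis of ETD($\lambda$,$\beta$) by \citet{hallak2015generalized}. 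The stationarity condition $\mathbb{E}[\delta_t e_t] = 0$ at $\theta^*$ then reduces to $\Phi^\top D_{d_\mu \circ \rho^{\text{COP}}_d}\bigl(T_\pi^\lambda \Phi\theta^* - \Phi\theta^*\bigr) = 0$, which by Assumption~4 identifies $\theta^*$ as the unique fixed point of $\Pi_{d_\mu \circ \rho^{\text{COP}}_d} T_\pi^\lambda$. The main obstacle sits in the fast timescale: making precise the claim that the two successive projections preserve the contractive behavior of $Y^\beta$ on the simplex slice. The non-negativity of $\phi_\rho$ is essential for invariance, mass preservation by $Y^\beta$ rules out drift off the affine constraint, and the learned normalizer $\hat d_{\phi_\rho}$ (consistent only in the limit) must be handled as a vanishing perturbation — reconciling these three ingredients in a single Lyapunov argument is where most of the technical work lies.
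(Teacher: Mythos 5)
Your overall architecture matches the paper's: a multi-timescale stochastic-approximation argument in which the empirical normalizer $\hat d_{\phi_\rho}$ converges at rate $O(1/t)$ (made negligible by $t\alpha^d_t\to 0$), the $\widehat{\rho_d}$ recursion converges on the intermediate timescale, and the $\theta$ recursion is analyzed last with $\widehat{\rho_d}$ frozen at its limit; your identification of the slow-timescale stationary point via $\Phi^\top D_{d_\mu\circ\rho^{\text{COP}}_d}\bigl(T_\pi\Phi\theta^*-\Phi\theta^*\bigr)=0$, and your care in treating that convergence as conditional, agree with the paper (which, note, only handles $\lambda=0$, so your general-$\lambda$ trace unrolling is unnecessary for the stated theorem).

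The genuine gap is in your fast-timescale stability argument. You assert that because $Y^\beta$ is a contraction (Lemma~\ref{Lem:Contraction}) and the two projections are non-expansive, the composed operator $\Pi_{\Delta_{\mathbb{E}_\mu[\phi_\rho]}}\Pi_{\phi_\rho}Y^\beta$ has a unique, globally asymptotically stable fixed point. This does not follow: Lemma~\ref{Lem:Contraction} gives a contraction in the \emph{unweighted} $L_2$ norm and only on the subspace orthogonal to $\rho_d$, whereas $\Pi_{\phi_\rho}$ is non-expansive in the $d_\mu$-weighted norm, and differences of points in the simplex slice lie in $\{u: d_\mu^\top u=0\}$, which is not the $L_2$-orthogonal complement of $\rho_d$. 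Three mismatched geometries cannot be chained into a contraction of the composition, and the feature-space projection does not preserve the eigenspace decomposition the lemma relies on. The paper does not make this claim; instead it characterizes the fixed point through the projected linear system $\Phi_\rho^\top(P_\pi^\top-I)D_\mu\Phi_\rho\,\theta_\rho=0$ (verifying equivalence with $\Pi_{\phi_\rho}Y^\beta\Phi_\rho\theta_\rho=\Phi_\rho\theta_\rho$ by left-multiplying by $\Phi_\rho^\top D_\mu$), and then devotes its effort to the part you defer as ``where the technical work lies'': showing the simplex projection can be replaced, away from the boundary, by an affine projection that is Fr\'echet differentiable and Lipschitz (discarding a feature if some coordinate of $\theta_\rho$ hits zero), so that the Kushner--Clark projected ODE theorem applies. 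Without either a correct contraction argument or this projected-equation route, your proof of $\widehat{\rho_d}_{,t}\to\rho^{\text{COP}}_d$ is incomplete.
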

The proof is given in the appendix and also follows the ODE method. Notice that a theorem is only given for $\lambda=0$, convergence results for general $\lambda$ should follow the work by \citet{yu2015etd}. 

A possible criticism on COP-TD($0$,$\beta$) is that it is not actually consistent, since in order to be consistent the original state space has to be small, in which case every off-policy algorithm is consistent as well. Still, the dependence on another set of features allows to trade-off accuracy with computational power in estimating $\rho_d$ and subsequently $V$. Moreover, smart feature selection may further reduce this gap, and COP-TD($0$, $\beta$) is still the first algorithm addressing this issue. We conclude with linking the error in $\rho_d$'s estimate with the difference in the resulting $\theta$, which suggests that a well estimated $\rho_d$ results in consistency:


\begin{corollary}
	Let $0<\epsilon<1$. If $ (1-\epsilon) \rho_d \leq \rho_d^\text{COP} \leq (1+\epsilon) \rho_d$, then the fixed point of COP-TD($0$,$\beta$)  with function approximation $\theta^\text{COP}$ satisfies the following, where $\| \cdot \|_\infty$ is the $L_\infty$ induced norm:
	\begin{equation}
		\begin{split}
			\| \theta^* - \theta^\text{COP} \|_\infty & \leq \\
			\epsilon \| A_\pi^{-1} \Phi^\top & \| _\infty \left(   R_\text{max}  +  (1+\gamma) \| \Phi \|_\infty \| \theta^\text{COP} \|_\infty \right),
		\end{split}
	\end{equation}
	where $A_\pi = \Phi^\top D_\pi (I - \gamma P_\pi) \Phi$, and $\theta^*$ sets the fixed point of the operator $\Pi_{d_\pi} T_\pi V$.
\end{corollary}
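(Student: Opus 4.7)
The plan is to reduce the claim to a standard perturbation estimate between two projected Bellman fixed points, one weighted by $D_\pi$ and one weighted by $D^{\text{COP}}\doteq D_\mu\operatorname{diag}(\rho_d^{\text{COP}})$. The key observation is that since $d_\mu(s)\rho_d(s)=d_\pi(s)$, the two weightings coincide exactly when $\rho_d^{\text{COP}}=\rho_d$, so their fixed-point equations differ only through the diagonal weighting. I would write the normal equations $A_\pi\theta^*=\Phi^\top D_\pi R_\pi$ and $A^{\text{COP}}\theta^{\text{COP}}=\Phi^\top D^{\text{COP}}R_\pi$, where $A^{\text{COP}}\doteq\Phi^\top D^{\text{COP}}(I-\gamma P_\pi)\Phi$, and then add and subtract $A_\pi\theta^{\text{COP}}$ to isolate the perturbation.

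This rearrangement yields the clean Bellman-residual identity
\[
A_\pi(\theta^*-\theta^{\text{COP}})=\Phi^\top(D_\pi-D^{\text{COP}})\bigl[R_\pi-(I-\gamma P_\pi)\Phi\theta^{\text{COP}}\bigr].
\]
Since $A_\pi$ is nonsingular (by Assumption~4 together with $\rho(\gamma P_\pi)<1$ and positivity of $d_\pi$), I can invert and apply submultiplicativity of the induced $L_\infty$-norm to get
\[
\|\theta^*-\theta^{\text{COP}}\|_\infty\leq \|A_\pi^{-1}\Phi^\top\|_\infty\cdot\|(D_\pi-D^{\text{COP}})[R_\pi-(I-\gamma P_\pi)\Phi\theta^{\text{COP}}]\|_\infty.
\]

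Next I use the hypothesis combined with the factorization $D_\pi-D^{\text{COP}}=D_\mu\operatorname{diag}(\rho_d-\rho_d^{\text{COP}})$ to get the entry-wise estimate $|(D_\pi-D^{\text{COP}})_{ss}|\leq \epsilon\, d_\mu(s)\rho_d(s)=\epsilon\, d_\pi(s)\leq\epsilon$, hence $\|D_\pi-D^{\text{COP}}\|_\infty\leq\epsilon$. The remaining factor is controlled by the triangle inequality together with $\|P_\pi\|_\infty=1$ (row-stochasticity of $P_\pi$) and the submultiplicative bound $\|\Phi\theta^{\text{COP}}\|_\infty\leq\|\Phi\|_\infty\|\theta^{\text{COP}}\|_\infty$, giving
\[
\|R_\pi-(I-\gamma P_\pi)\Phi\theta^{\text{COP}}\|_\infty\leq R_{\max}+(1+\gamma)\|\Phi\|_\infty\|\theta^{\text{COP}}\|_\infty.
\]
Multiplying the three pieces together yields exactly the stated inequality.

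The main technical obstacle is identifying the correct algebraic rearrangement that cleanly factors out $(D_\pi-D^{\text{COP}})$ from both the right-hand side and the operator; once this residual-style identity is written down, the rest is routine $\infty$-norm bookkeeping. A minor point worth verifying carefully is the invertibility of $A_\pi$ under the paper's assumptions, and the translation from the coordinate-wise multiplicative hypothesis on $\rho_d^{\text{COP}}$ into an operator-norm bound on $D_\pi-D^{\text{COP}}$ via the factor $d_\pi(s)\leq 1$.
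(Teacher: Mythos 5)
Your proposal is correct and follows essentially the same route as the paper: identical normal equations, the same add-and-subtract of $A_\pi\theta^{\text{COP}}$, the same bound $\|D_\pi - D^{\text{COP}}\|_\infty \leq \epsilon$ via $d_\pi(s)\leq 1$, and the same $L_\infty$ bookkeeping. Your only (cosmetic) difference is writing the right-hand side as a single Bellman residual $\Phi^\top(D_\pi - D^{\text{COP}})\bigl[R_\pi-(I-\gamma P_\pi)\Phi\theta^{\text{COP}}\bigr]$ rather than as two separate terms, which yields the identical final bound.
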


\subsection{Relation to ETD($\lambda$, $\beta$)} \label{Subsec:ETD}

Recently, \citet{SuttonMW15} had suggested an algorithm for off-policy evaluation called Emphatic TD. Their algorithm was later on extended by \citet{hallak2015generalized} and renamed ETD($\lambda$, $\beta$), which was shown to perform extremely well empirically by \citet{white2016investigating}. ETD($0$, $\beta$) can be represented as:
\begin{equation}\label{Eq:ETD}
	\begin{split}
		F_t &= (1-\beta) \sum_{n=0}^\infty \beta^n \Gamma_t^n, \\ 
		\hat{V}_{t+1}(s_t) &= \hat{V}_t(s_t) + \alpha_t F_t \rho_t  \left( r_t + \theta^\top_t (\gamma\phi(s_{t+1})  -  \phi(s_t) ) \right).
	\end{split}
\end{equation}

As mentioned before, ETD($\lambda$, $\beta$) converges to the fixed point of $\Pi_f T_\pi^\lambda$ \citep{yu2015etd}, where $f = \mathbb{E} \left[ F_t | s_t \right] = (I - \beta P_\pi)^{-1} d_\mu$. Error bounds can be achieved by showing that the operator $\Pi_f T_\pi^\lambda$ is a contraction under certain requirements on $\beta$ and that the variance of $F_t$ is directly related to $\beta$ as well \citep{hallak2015generalized} (and thus affects the convergence rate of the process). 

When comparing ETD($\lambda$,$\beta$)'s form to COP-TD($\lambda$,$\beta$)'s, instead of spending memory and time resources on a state/feature-dependent $F_t$, ETD($\lambda$,$\beta$) uses a one-variable approximation. The resulting $F_t$ is in fact a one-step estimate of $\rho_d$, starting from $\widehat{\rho_d}(s) \equiv 1$ (see Equations \ref{Eq:GammaGal}, \ref{Eq:ETD}), up to a minor difference: $F_t^\text{ETD} = \beta F_t^\text{COP-TD} + 1$ (which following our logic adds bias to the estimate \footnote{We have conducted several experiments with an altered ETD and indeed obtained better results compared with the original, these experiments are outside the scope of the paper.}).

Unlike ETD($\lambda$, $\beta$), COP-TD($\lambda$,$\beta$)'s effectiveness depends on the available resources. The number of features $\phi_\rho(s)$ can be adjusted accordingly to provide the most affordable approximation. The added cost is fine-tuning another step-size, though $\beta$'s effect is less prominent.
%


\section{The Logarithm Approach for Handling Long Products}
We now present a heuristic algorithm which works similarly to COP-TD($\lambda$, $\beta$). Before presenting the algorithm, we explain the motivation behind it.

\subsection{Statistical Interpretation of TD($\lambda$)} \label{Subsec:stat}
\citet{NIPS2011_4472} suggested a statistical interpretation of TD($\lambda$). They show that under several assumptions the TD($\lambda$) estimate $R^\lambda_{s_t}$ is the maximum likelihood estimator of $V(s_t)$ given $R_{s_t}^n$: (1) Each $R_{s_t}^n$ is an unbiased estimator of $V(s_t)$; (2) The random variables $R_{s_t}^n$ are independent and specifically uncorrelated; (3) The random variables $R_{s_t}^n$ are jointly normally distributed; and (4) The variance of each $R_{s_t}^n$ is proportional to $\lambda^n$.


Under Assumptions 1-3 the maximum likelihood estimator of $V(s)$ given its previous estimate can be represented as a linear convex combination of $R_{s_t}^n$ with weights:
\begin{equation}
	w_n= \frac{\left[ \mathrm{Var} \left(R_{s_t}^{(n)}\right) \right]^{-1}}{\sum_{m=0}^\infty \left[ \mathrm{Var}\left(R_{s_t}^{(m)}\right) \right]^{-1}}.
\end{equation}


Subsequently, in \citet{NIPS2011_4472} Assumption $4$ was relaxed and instead a closed form approximation of the variance was proposed. In a follow-up paper by \citet{NIPS2015_5807}, the second assumption was also removed and the weights were instead given as: $w_n = \frac{\textbf{1}^\top cov(\textbf{R}_{s_t}) \textbf{e}_n}{\textbf{1}^\top cov(\textbf{R}_{s_t}) \textbf{1}}$, where the covariance matrix can be estimated from the data, or otherwise learned through some parametric form. 

While both the approximated variance and learned covariance matrix solutions improve performance on several benchmarks, the first uses a rather crude approximation, and the second solution is both state-dependent and based on noisy estimates of the covariance matrix. In addition, there aren't efficient on-line implementations since all past weights should be recalculated to match a new sample. Still, the suggested statistical justification is a valuable tool in assessing the similar role of $\beta$ in ETD($\lambda$, $\beta$).

\subsection{Variance Weighted $\Gamma_t^n$}
As was shown by \citet{NIPS2011_4472}, we can use state-dependent weights instead of $\beta$ exponents to obtain better estimates. The second moments are given explicitly as follows\footnote{The covariances can be expressed analytically as well, for clarity we drop this immediate result.}: $\mathbb{E} \left[  \left( \Gamma_t^n \right)^2 |s_t \right] = \frac{d_\mu^\top \tilde{P}^{n-1} \textbf{e}_{s_t}}{d_\mu (s_t)}$, where $\left[ \tilde{P} \right]_{s,s'} = \sum_{a \in \mathcal{A}} \frac{\pi^2 (a|s)}{\mu (a|s)}P(s'|s,a)$. 

These can be estimated for each state separately. Notice that the variances increase exponentially depending on the largest eigenvalue of $\tilde{P}$ (as Assumption 4 dictates), but this is merely an asymptotic behavior and may be relevant only when the weights are already negligible. Hence, implementing this solution on-line should not be a problem with the varying weights, as generally only the first few of these are non-zero. While this solution is impractical in problems with large state spaces parameterizing or approximating these variances (similarly to \citet{NIPS2015_5807}) could improve performance in specific applications. 


\subsection{Log-COP-TD($\lambda$, $\beta$)} \label{Subsec:logrTD}
Assumption 3 in the previous section is that the sampled estimators ($R^{(n)},\Gamma_t^n$) are normally distributed. For on policy TD($\lambda$), this assumption might seem not too harsh as the estimators $R^{(n)}$ represent growing \textbf{sums} of random variables. However, in our case the estimators $\Gamma_t^n$ are growing \textbf{products} of random variables. To correct this issue we can define new estimators using a logarithm on each $\tilde{\Gamma}_t^n$:

\begin{equation} \label{Eq:logApprox}
	\begin{split}
		\log \left[ \rho_d(s_t) \right] &= \log \left[ \mathbb{E}\left[  \widehat{ \rho _d } (s_{t-m})  \prod_{k=t-m}^{t-1} \rho_k \bigm| s_t \right] \right] \\
		&\approx  \log\left[  \widehat{ \rho _d } (s_{t-m}) \right] + \sum_{k=t-m}^{t-1} \mathbb{E}\left[ \log \left[ \rho_k \right] | s_t \right].
	\end{split}
\end{equation}

This approximation is crude -- we could add terms reducing the error through Taylor expansion, but these would be complicated to deal with. Hence, we can relate to this method mainly as a well-motivated heuristic. 

Notice that this formulation resembles the standard MDP formulation, only with the corresponding "reward" terms $\log [\rho_t]$ going backward instead of forward, and no discount factor. Unfortunately, without a discount factor we cannot expect the estimated value to converge, so we propose using an artificial one $\gamma_{\log}$. We can incorporate function approximation for this formulation as well. Unlike COP-TD($\lambda$, $\beta$), we can choose the features and weights as we wish with no restriction, besides the linear constraint on the resulting $\rho_d$ through the weight vector $\theta_\rho$. This can be approximately enforced by normalizing $\theta_\rho$ using $\frac{X}{t} \doteq \frac{1}{t}\sum_t \exp ( \theta_{\rho,t}^\top \phi(s_t))$ (which should equal $1$ if we were exactly correct). We call the resulting algorithm Log-COP-TD($\lambda$,$\beta$).

\begin{algorithm}             
	\caption{Log-COP-TD($\lambda$,$\beta$) with Function Approximation, Input: $\theta_0$,$\theta_{\rho,0}$}          
	\label{alg:logrTD_FA}      
	\begin{algorithmic}[1]   
		\STATE Init: $F_0 = 0, \quad n_0(\beta) = 1, \quad N(s) =0$
		\FOR{$t=1,2,...$}
		\STATE Observe $s_t, a_t, r_t, s_{t+1}$
		\STATE \textcolor{blue}{Update normalization terms:}
		\STATE $n^\beta_t = \beta n_t^\beta + 1, \quad N_\phi =  \gamma_{\log}(\beta N_\phi + \phi_\rho(s_t)), \quad X = X + \exp ( \theta_{\rho,t}^\top \phi(s_t))$
		\STATE \textcolor{blue}{Update $\log(\Gamma_t^n)$'s weighted average:	}   	
		\STATE $F_t = \beta \gamma_{\log} F_{t-1} + n_t^\beta \log [\rho(s_{t-1})]$
		\STATE \textcolor{blue}{Update \& project by $\log(\rho_d)$'s TD error:}
		\STATE $\delta^d_t =  \frac{F_t}{n_t^\beta} + \theta^\top_{\rho ,t} \left( \frac{N_\phi}{n_t^\beta} - \phi_\rho(s_t)  \right) $
		\STATE $\theta_{\rho ,t+1} =  \theta_{\rho ,t} + \alpha^d_t \delta^d_t \phi_\rho(s_t)$
		\STATE \textcolor{blue}{Off-policy TD($\lambda$):}
		\STATE $M_t = \lambda + (1-\lambda) \exp\left( \theta^\top_{\rho ,t+1} \phi_\rho(s_t) \right)/ (X/t) $	    
		\STATE $e_t = \rho_t \left( \lambda \gamma e_t + M_t \phi(s_{t+1})  \right)$
		\STATE $\delta_t = r_t + \theta^\top_t (\gamma\phi(s_{t+1})  -  \phi(s_t) )$
		\STATE $\theta_{t+1} = \theta_t + \alpha_t \delta_t e_t$
		\ENDFOR
	\end{algorithmic}
\end{algorithm}

\subsection{Using the Original Features}
An interesting phenomenon occurs when the behavior and target policies employ a feature based Boltzmann distribution for choosing the actions: $\mu(a|s) = \exp \left(\theta_{a, \mu}^\top \phi(s) \right)$, and $\pi(a|s) = \exp \left(\theta_{a, \pi}^\top \phi(s) \right)$, where a constant feature is added to remove the (possibly different) normalizing constant. Thus, $\log(\rho_t) = (\theta_{a, \pi} - \theta_{a, \mu} )^\top \phi(s_t)$, and Log-COP-TD($\lambda$,$\beta$) obtains a parametric form that depends on the original features instead of a different set.


\subsection{Approximation Hardness}
As we propose to use linear function approximation for $\rho_d(s)$ and $\log\left(\rho_d(s)\right)$ one cannot help but wonder how hard it is to approximate these quantities, especially compared to the value function. The comparison between $V(s)$ and $\rho_d(s)$ is problematic for several reasons: 
\begin{enumerate}
	\item The ultimate goal is estimating $V^{\pi}(s)$, approximation errors in $\rho_d(s)$ are second order terms.
	\item The value function $V^{\pi}(s)$ depends on the policy-induced reward function and transition probability matrix, while $\rho_d(s)$ depends on the stationary distributions induced by both policies. Since each depends on at least one distinct factor - we can expect different setups to result in varied approximation hardness. For example, if the reward function has a poor approximation then so will $V^{\pi}(s)$, while extremely different behavior and target policies can cause $\rho_d(s)$ to behave erratically.
	\item Subsequently, the choice of features for approximating $V^{\pi}(s)$ and $\rho_d(s)$ can differ significantly depending on the problem at hand. 
\end{enumerate}
If we would still like to compare $V^{\pi}(s)$ and $\rho_d(s)$, we could think of extreme examples:
\begin{itemize}
	\item When $\pi=\mu$, $\rho_d(s) \equiv 1$, when $R(s) \equiv 0$ then $V^{\pi}(s) \equiv 0$.
	\item In the chain MDP example in Section \ref{Sec:Motiv} we saw that $\rho_d(s)$ is an exponential function of the location in the chain. Setting reward in one end to $1$ will result in an exponential form for $V^{\pi}(s)$ as well. Subsequently, in the chain MDP example approximating $\log\left(\rho_d(s)\right)$ is easier than $\rho_d(s)$ as we obtain a linear function of the position; This is not the general case.
\end{itemize}

\section{Experiments} \label{Subsec:beta}
We have performed 3 types of experiments. Our first batch of experiments (Figure \ref{Fig:rho_d}) demonstrates the accuracy of predicting $\rho_d$ by both COP-TD($\lambda$, $\beta$) and Log-COP-TD($\lambda$, $\beta$). We show two types of setups in which visualization of $\rho_d$ is relatively clear - the chain MDP example mentioned in Section \ref{Sec:Motiv} and the mountain car domain \cite{sutton_reinforcement_1998} in which the state is determined by only two continuous variables - the car's position and speed. The parameters $\lambda$ and $\beta$ exhibited low sensitivity in these tasks so they were simply set to $0$, we show the estimated $\rho_d$ after $10^6$ iterations. For the chain MDP (top two plots, notice the logarithmic scale) we first approximate $\rho_d$ without any function approximation (top-left) and we can see COP-TD manages to converge to the correct value while Log-COP-TD is much less exact. When we use linear feature space (constant parameter and position) Log-COP-TD captures the true behavior of $\rho_d$ much better as expected. The two lower plots show the error (in color) in $\rho_d$ estimated for the mountain car with a pure exploration behavior policy vs. a target policy oriented at moving right. The z-axis is the same for both plots and it describes a much more accurate estimate of $\rho_d$ obtained through simulations. The features used were local state aggregation. We can see that both algorithms succeed similarly on the position-speed pairs which are sampled often due to the behavior policy and the mountain. When looking at more rarely observed states, the estimate becomes worse for both algorithms, though Log-COP-TD seems to be better performing on the spike at position $> 0$. 
\begin{figure} 
	\caption{Estimation quality of COP-TD and Log-COP-TD in the chain MDP (top) and mountain car (bottom) problems. The chain MDP plots differ by the function approximation and the shading reflects one standard deviation over 10 trajectories. The mountain car plots compare COP-TD with Log-COP-TD where the z-axis is the same (true $\rho_d$) with the colors specifying the error. }
	\label{Fig:rho_d}
	\centering  
	\hspace*{-0.65cm}\includegraphics[width=0.57\textwidth]{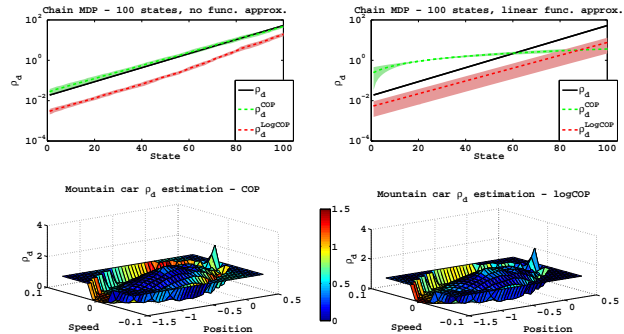}
\end{figure}

Next we test the sensitivity of COP-TD($\lambda$, $\beta$) and Log-COP-TD($\lambda$,$\beta$) to the parameters $\beta$ and $\gamma_{\log}$ (Figure \ref{Fig:lambda_beta}) on two distinct toy examples - the chain MDP introduced before but with only 30 states with the position-linear features, and a random MDP with 32 states, 2 actions and a $5$-bit binary feature vector along with a free parameter (this compact representation was suggested by \citet{white2016investigating} to approximate real world problems). The policies on the chain MDP were taken as described before, and on the random MDP a state independent $0.75$/$0.25$ probability to choose an action by the behavior/target policy. As we can see, larger values of $\beta$ cause noisier estimations in the random MDP for COP-TD($\lambda$, $\beta$), but has little effect in other venues. As for $\gamma_{\log}$ - we can see that if it is too large or too small the error behaves sub-optimally, as expected for the crude approximation of Equation \ref{Eq:logApprox}. In conclusion, unlike ETD($\lambda$, $\beta$), Log/COP-TD($\lambda$, $\beta$) are much less effected by $\beta$, though $\gamma_{\log}$ should be tuned to improve results. 

\begin{figure} 
	\caption{The effect of $\beta, \gamma_{\log}$ on COP-TD($\lambda$,$\beta$) and Log-COP-TD($\lambda$,$\beta$), the y-axis is $\rho_d$'s estimation sum of squared errors (SSE) over all states.}
	\label{Fig:lambda_beta}
	\centering  
	\hspace*{-0.4cm}\includegraphics[width=0.5\textwidth,height=7cm]{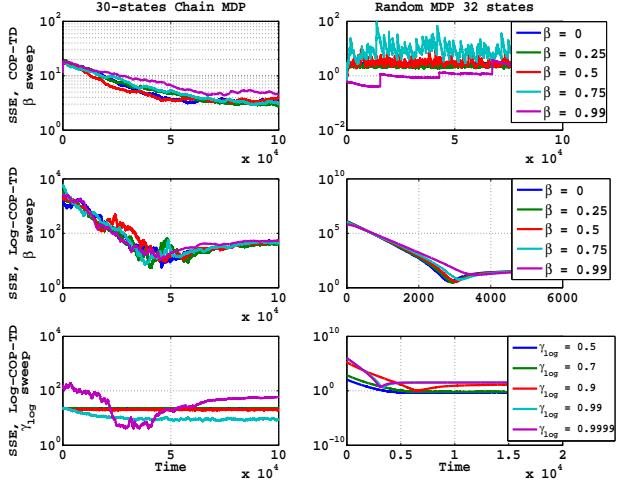}
\end{figure}

Our final experiment (Figure \ref{Fig:comp}) compares our algorithms to ETD($\lambda$, $\beta$) and GTD($\lambda$, $\beta$) over 4 setups: chain MDP with 100 states with right half rewards $1$ with linear features, a 2 action random MDP with 256 states and binary features, acrobot (3 actions) and cart-pole balancing (21 actions) \cite{sutton_reinforcement_1998} with reset at success and state aggregation to $100$ states. In all problems we used the same features for $\rho_d$ and $V^\pi(s)$ estimation, $\gamma=0.99$, constant step size $0.05$ for the TD process and results were averaged over 10 trajectories, other parameters ($\lambda$, $\beta$, other step sizes, $\gamma_{\log}$) were swiped over to find the best ones. To reduce figure clutter we have not included standard deviations though the noisy averages still reflect the variance in the process. Our method of comparison on the first 2 setups estimates the value function using the suggested algorithm, and finds the $d_\pi$ weighted average of the error between $V$ and the on-policy fixed point $\Pi_\pi T V_\pi$: 
\begin{equation} \label{Eq:err}
	\| \hat{V} -  \Pi_\pi T V_\pi \|^2_{d_\pi} = \sum_s d_\pi(s) \left[ ( \theta^* - \hat{\theta} )^\top \phi(s) \right]^2,
\end{equation}
where $\theta^*$ is the optimal $\theta$ obtained by on-policy TD using the target policy. On the latter continuous state problems we applied on-line TD on a different trajectory following the target policy, used the resulting $\theta$ value as ground truth and taken the sum of squared errors with respect to it. The behavior and target policies for the chain MDP and random MDP are as specified before. For the acrobot problem the behavior policy is uniform over the 3 actions and the target policy chooses between these with probabilities $(\frac{1}{6}, \frac{1}{3}, \frac{1}{2})$. For the cart-pole the action space is divided to 21 actions from -1 to 1 equally, the behavior policy chooses among these uniformly while the target policy is 1.5 times more prone to choosing a positive action than a negative one.

\begin{figure} 
	\caption{Error over time of several on-line off-policy algorithms.}
	\label{Fig:comp}
	\centering  
	 \hspace*{-0.65cm}\includegraphics[width=0.56\textwidth]{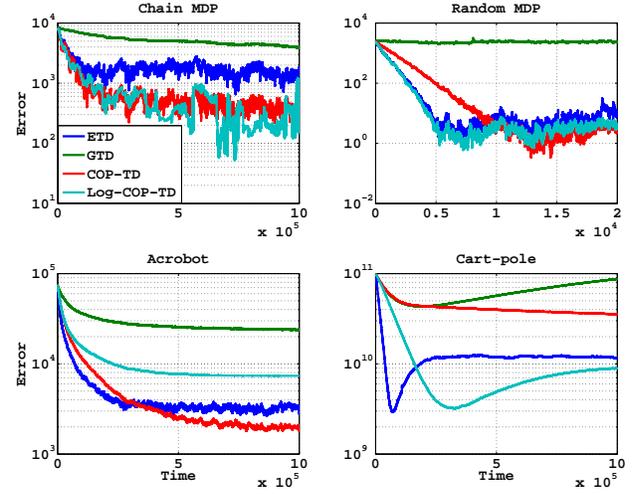}
\end{figure}

The experiments show that COP-TD($\lambda$, $\beta$) and Log-COP-TD($\lambda$, $\beta$) have comparable performance to ETD($\lambda$, $\beta$) where at least one is better in every setup. The advantage in the new algorithms is especially seen in the chain MDP corresponding to a large discrepancy between the stationary distribution of the behavior and target policy. GTD($\lambda$) is consistently worse on the tested setups, this might be due to the large difference between the chosen behavior and target policies which affects GTD($\lambda$) the most.

\section{Conclusion}
Research on off-policy evaluation has flourished in the last decade. While a plethora of algorithms were suggested so far, ETD($\lambda$, $\beta$) by \citet{hallak2015generalized} has perhaps the simplest formulation and theoretical properties. Unfortunately, ETD($\lambda$, $\beta$) does not converge to the same point achieved by on-line TD when linear function approximation is applied. 

We address this issue with COP-TD($\lambda$,$\beta$) and proved it can achieve consistency when used with a correct set of features, or at least allow trading-off some of the bias by adding or removing features. Despite requiring a new set of features and calibrating an additional update function, COP-TD($\lambda$,$\beta$)'s performance does not depend as much on $\beta$ as ETD($\lambda$,$\beta$), and shows promising empirical results. 

We offer a connection to the statistical interpretation of TD($\lambda$) that motivates our entire formulation. This interpretation leads to two additional approaches: (a) weight the $\Gamma^n_t$ using estimated variances instead of $\beta$ exponents and (b) approximating $\log[\rho_d]$ instead of $\rho_d$; both approaches deserve consideration when facing a real application.

\bibliography{ContractionBib}
\bibliographystyle{icml2017}

\newpage
\onecolumn
\section{Appendix}
\begin{table*}[!htbp]
	\caption{Notation table}
	
	\begin{center}	
		\begin{tabular}{ | l | l | }
			
			\hline
			$\lambda, \beta, \gamma_{\log}$ & Free parameters of TD algorithms mentioned in the paper \\ \hline	    
			$\mathcal{S}$ & State space \\ \hline
			$\mathcal{A}$ & Action space \\ \hline 
			$\mathcal{P}, P(s'|s,a)$ & Transition probability distribution \\	    \hline	   	   
			$\mathcal{R}, r(s,a)$ & Reward probability distribution \\ \hline 
			$\zeta$ & Distribution of the first state in the MDP \\	    \hline	    
			$\gamma$ & Discount factor \\ \hline
			$r_t = t(s_t,a_t)$ & Reward at time $t$, obtained at state $s_t$ and action $a_t$ \\ \hline	    
			$\mu(a|s)$ & Behavior policy (which generated the samples) \\ \hline	  	    
			$\pi(a|s)$ & Target policy \\ \hline	  	    	    
			$V^\pi(s)$ & Value function of state $s$ by policy $\pi$ \\ \hline	  	    	    
			$T$ & Bellman operator \\ \hline	  	    	    
			$R_\pi, P_\pi, T_\pi$  & Induced reward vector, transition matrix and Bellman operator by policy $\pi$ \\ \hline	  	    	    
			$R^{(n)}_{t, s_t}, R^\lambda_{t, s_t}$ & Value function estimates used in TD($\lambda$)\\ \hline	  	    	    
			$T^\lambda$ & Underlying TD($\lambda$) operator \\ \hline	  	    	    
			$d_\pi(s)$ & $\pi$-induced stationary distributions on the state space by policy  \\ \hline	   	    
			$\phi(s)$ & Feature vector of state $s$ \\ \hline 		
			$\theta$ & Weight vector for estimating $V(s)$ \\	    \hline	   	   
			$\rho_t$ & One-step importance sampling ratio \\	    \hline
			$\Gamma^n_t$ & $n$-steps importance sampling ratio \\   \hline	   	   		   	   	   	      	   
			$\rho_d$ & Stationary distribution ratio \\ \hline
			$\Phi$ & The feature matrix for each state \\	    \hline  	     	    
			$\tilde{\Gamma}^n_t$ & Estimated probabilities ratio \\	    \hline  	     	    
			$\tilde{\Gamma}^\beta_t$ & Weighted estimated probabilities ratio \\	    \hline  	   
			$\triangle_{d_\mu}$ & $d_\mu$ weighted simplex \\	    \hline  	     	    	
			$\alpha_t$ & Learning rate \\	    \hline	   	   	   	    	   	
			$Y, Y^\beta$ & COP operators, underlying COP-TD($\lambda$, $\beta$) \\	    \hline		   	
			$\theta_\rho$ & Weight vector for estimating $\rho_d$ \\ \hline

		\end{tabular}
	\end{center}
\end{table*}

Assumptions:
\begin{enumerate}
	\item Under both policies the induced Markov chain is ergodic.
	\item The first state $s_0$ is distributed according to the behavior policy $d_\mu(s)$.
	\item The support of $\mu$ contains the support of $\pi$, i.e.  $\pi(a|s) > 0 \Rightarrow \mu(a|s) > 0 $.
	\item The feature matrix $[\Phi]_{s,:} \doteq \phi(s)$ has full rank.
\end{enumerate}

\subsection{Proof of Lemma \ref{Lem:Main}}	
\textit{If the step sizes $\alpha_t$ hold $\sum_{t=0}^\infty \alpha_t = \infty, \sum_{t=0}^\infty \alpha^2_t < \infty$ then the process described by Equation \ref{Eq:Target} converges almost surely to the fixed point of $\Pi_\pi T_\pi V = V$. }
\begin{proof}
	Similarly to on-policy TD, we define $A$ and $b$, the fixed point is the solution to $A \theta = b$. First we find $A$ and show stability:
	\begin{equation}
	\begin{split}
	A & = \lim\limits_{t\rightarrow \infty} \mathbb{E}_\mu \left[ \rho_t\rho_d(s_t)\phi_t(\phi_t - \gamma \phi_{t+1})^\top \right] \\
	\quad & = \sum_s d_\mu (s) \rho_d(s) \mathbb{E}_\mu \left[  \rho_k \phi_k (\phi_k - \gamma \phi_{k+1})^\top | s_k = s \right] \\
	\quad & = \sum_s d_\pi(s)  \mathbb{E}_\pi \left[  \rho_k \phi_k (\phi_k - \gamma \phi_{k+1})^\top | s_k = s \right] \\
	\quad & = \Phi^\top D_\pi (I - \gamma P_\pi) \Phi.
	\end{split}
	\end{equation}
	This is exactly the same $A$ we would have obtained from TD($0$) and it is negative definite (see \cite{SuttonMW15}). Similarly we can find $b$:
	\begin{equation}
	\begin{split}
	b & = \lim\limits_{t\rightarrow \infty} \mathbb{E}_\mu \left[ \rho_t\rho_d(s_t)\phi_t r_t ^\top | s_k = s \right] \\
	\quad & = \sum_s d_\mu (s) \rho_d(s) \mathbb{E}_\mu \left[  \rho_k \phi_k r_k  | s_k = s \right] \\
	\quad & = \sum_s d_\pi(s)  \mathbb{E}_\pi \left[  \phi_k r_k | s_k = s \right] \\
	\quad & = \Phi^\top D_\pi R_\pi,
	\end{split}
	\end{equation}	
	
	and we obtained the same $b$ as on-policy TD($0$) with $\pi$. 
	
	Now we consider the noise of this off-policy TD, which is exactly the same noise as the on-policy TD only multiplied by $\rho_t \rho_d(s_t)$ - as long as the noise term of the ODE formulation \cite{kushner2003stochastic} is still bounded, the proof is exactly the same. According to Assumption 1, we know that $\rho_d$ is lower and upper bounded. By Assumption 3 we also know that $\rho_t$ is lower and upper bounded. Therefore the noise of the new process is bounded and the same a.s. convergence applies as on-policy TD($0$) \cite{tsitsiklis1997analysis}. Since $A,b$ are the same as on-policy TD($0$) for the target policy $\pi$, the convergence is to the same fixed point.
\end{proof}

\subsection{Proof of Lemma \ref{Lem:Gamma}}
\textit{Let $\widehat{\rho_d}$ be an unbiased estimate of $\rho _d$, and for every $n=0,1,\dots,t$ define  $\tilde{\Gamma}_t^n\doteq \widehat{\rho_d} (s_{t-n}) \Gamma^n_t$. Then:
	\begin{equation*}
	\mathbb{E}_\mu \left[ \tilde{\Gamma}^n_t | s_t \right] = 	\rho_d(s_t).
	\end{equation*}}
\begin{proof}
For any function on the state space $u(s)$:
\begin{equation}
\begin{split}
\mathbb{E}_\mu \left[ \Gamma^n_t u(s_{t-n}) | s_t \right] =&  \sum_{ \left( s_{i} \right)_{i=t-n}^{t-1} } \Pr_\mu ( \left( s_{i} \right)_{i=t-n}^{t-1} | s_t ) \Gamma^n_t u(s_{t-n}) \\
=& \sum_{ \left( s_{i} \right)_{i=t-n}^{t-1} } \frac{\Pr_\mu ( \left( s_{i} \right)_{i=t-n}^{t-1}, s_t )}{\Pr_\mu(s_t)} \Gamma^n_t u(s_{t-n}) \\
=& \sum_{ \left( s_{i} \right)_{i=t-n}^{t-1} } \frac{\Pr_\mu(s_{t-n}) \Pr_\pi( \left( s_{i} \right)_{i=t-n}^{t-1}, s_t |s_{t-n})} {\Pr_\mu(s_t)} u(s_{t-n}) \\
=& \sum_{ s_{t-n} } \frac{\Pr_\mu(s_{t-n}) \Pr_\pi( s_t |s_{t-n})} {\Pr_\mu(s_t)} u(s_{t-n}) \\	
=& u^\top D_\mu P^n_\pi D_\mu^{-1} e_{s_t},
\end{split}
\end{equation}
where $e_{s_t}$ is the unit vector of state $s_t$. So, for an unbiased estimate of $\rho _d$ denoted $\widehat{\rho_d}$ we can define and derive:
\begin{equation} \label{Eq:GammaGal} 
\begin{split}
\tilde{\Gamma}_t^n \doteq & \widehat{\rho_d} (s_{t-n}) \Gamma^n_t = \widehat{\rho_d} (s_{t-n}) \prod_{i=0}^{n-1} \rho_{t-i-1}, \\
\Rightarrow & \quad \mathbb{E}_\mu \left[ \tilde{\Gamma}^n_t | s_t \right] = \mathbb{E} \left[ \widehat{\rho_d} \right]^\top D_\mu P^n_\pi D_\mu^{-1} e_{s_t} \\
&= \rho_d^\top D_\mu P^n_\pi D_\mu^{-1} e_{s_t} \\
&= d_\pi^\top P^n_\pi D_\mu^{-1} e_{s_t} \\
&= d_\pi^\top D_\mu^{-1} e_{s_t} =	\rho_d(s_t).
\end{split}
\end{equation}
\end{proof}

\subsection{Proof of Lemma \ref{Lem:Contraction}}	
Under the ergodicity assumption, denote the eigenvalues of $P_\pi$ by $0 \leq \dots \leq | \xi_2 | < \xi_1 = 1$. Then $Y^\beta$ is a $\max_{i \neq 1} \frac{(1-\beta)|\xi_i|}{|1-\beta\xi_i|}$-contraction in the $L_2$-norm on the orthogonal subspace to $\rho_d$, and $\rho_d$ is a fixed point of $Y^\beta$.
\begin{proof}
	We first show that $\rho_d$ is  a fixed point of $Y^\beta$:
	\begin{equation}
	\begin{split}
	Y^\beta \rho_d &= (1-\beta) D^{-1}_\mu P^\top_\pi (I - \beta P^\top_\pi)^{-1} D_\mu \left( D^{-1}_\mu d_\pi \right) \\
	&= (1-\beta) D^{-1}_\mu P^\top_\pi (I - \beta P^\top_\pi)^{-1} d_\pi \\
	&= (1-\beta) D^{-1}_\mu (1-\beta)^{-1} d_\pi \\	
	&= D^{-1}_\mu d_\pi = \rho_d
	\end{split}
	\end{equation}
	Due to similarity, the eigenvalues of $Y^\beta$ are the same as these of $(1-\beta) P^\top_\pi (I - \beta P^\top_\pi)^{-1}$ which is a stochastic matrix with eigenvalues $\left( \frac{(1-\beta)\xi_i}{1-\beta\xi_i} \right)_{i=1}^{|S|}$, where for $i=1$ we obtain the eigenvalue $1$. Now for every vector orthogonal to $\rho_d$ denoted $u$, the first eigenvalue has no effect on its spectral decomposition, which means that $\| Y^\beta u \| \leq \max_{i \neq 1} \frac{(1-\beta)|\xi_i|}{|1-\beta\xi_i|} \|u \|$.
\end{proof}	

\subsection{Proof of Theorem \ref{Thm:COP1}}	
\textit{If the step sizes satisfy $\sum_t \alpha_t = \sum_t \alpha^d_t = \infty, \sum_t (\alpha^2_t + (\alpha_t^d)^2) < \infty, \frac{\alpha_t}{\alpha^d_t} \rightarrow 0, t \alpha^d_t \rightarrow 0$, and $\mathbb{E} \left[ (\beta^n \Gamma_t^n)^2 | s_t \right] \leq C$ for some constant $C$ and every $t,n$, then after applying COP-TD($0$, $\beta$), $\widehat{ \rho_d }_{,t}$  converges to $\rho_d$ almost surely, and $\theta_t$ converges to the fixed point of $\Pi_\pi T_\pi V$. }
\begin{proof}
	We use a three timescales stochastic approximation analysis. The fastest process is $\hat{d}_\mu (s)$ which converges naturally with time-step $O(\frac{1}{t})$:
	
	\begin{equation}
	\hat{d}_{\mu, t+1} = \frac{1}{t+1}\sum_{k=0}^{t}e_{s_k} = \frac{1}{t+1} (t \hat{d}_{\mu, t}  + e_{s_t}) = \hat{d}_{\mu, t} + \frac{1}{t+1} (  e_{s_t} - \hat{d}_{\mu, t}).
	\end{equation}
	The process $\hat{d}_{\mu, t}$ converges almost surely to $d_\mu$ by the strong law of large numbers. 
		Our next process is $\widehat{\rho_d}_{,t}$, which we will show converges a.s. to $\rho_d$ with $\hat{d}_\mu (s) = d_\mu(s)$:

	\begin{lemma}
		The process:
		\begin{equation}
		\begin{split}
		F_t &= \rho_{t-1} (\beta F_{t-1} + e_{s_{t-1}}), \quad\quad\quad n(\beta) = \beta n(\beta) + 1 \\
		\widehat{ \rho_d }_{,t+1}(s_t) &= \Pi_{\Delta_{d_\mu}} \left( \widehat{ \rho _d }_{,t}(s_t) + \alpha^d_t \left( \frac{F^\top_t \widehat{ \rho _d }_{,t}}{n(\beta)} - \widehat{ \rho _d }_{,t}(s_t) \right) \right)
		\end{split}
		\end{equation}
		Converges almost surely to $\rho_d$.
	\end{lemma}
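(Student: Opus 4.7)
My approach is the ODE method of Kushner-Yin / Borkar for asynchronous stochastic approximation. The plan has four components: (i) rewrite the update in standard SA form with an asymptotic drift equal to $Y^\beta u - u$; (ii) use Lemma~\ref{Lem:Contraction} to identify $\rho_d$ as the globally asymptotically stable equilibrium of the associated ODE; (iii) control the martingale noise using the second-moment hypothesis on $\beta^n \Gamma_t^n$; (iv) verify that the simplex projection and the asynchronous per-coordinate nature of the update are compatible with standard projected SA theorems.

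The first step is a drift calculation. Unrolling the recursion yields $F_t = \sum_{n=1}^{t}\beta^{n-1}\Gamma_t^n e_{s_{t-n}}$ and $n(\beta) \to (1-\beta)^{-1}$. Using the identity $\mathbb{E}_\mu[\Gamma_t^n u(s_{t-n}) \mid s_t] = [D_\mu^{-1}(P_\pi^\top)^n D_\mu u](s_t)$ derived in the proof of Lemma~\ref{Lem:Gamma}, and summing in $n$, I get
\begin{equation*}
\lim_{t\to\infty}\mathbb{E}_\mu\!\left[\frac{F_t^\top u}{n(\beta)} \,\Big|\, s_t\right] = (Y^\beta u)(s_t),
\end{equation*}
so the iteration reads $\widehat{\rho_d}_{,t+1}(s_t) = \Pi_{\Delta_{d_\mu}}(\widehat{\rho_d}_{,t}(s_t) + \alpha_t^d[(Y^\beta \widehat{\rho_d}_{,t})(s_t) - \widehat{\rho_d}_{,t}(s_t) + M_t + \epsilon_t])$, where $M_t$ is a martingale difference and $\epsilon_t$ a vanishing transient bias. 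Because only the $s_t$-coordinate is touched at step $t$ and state $s$ is visited with asymptotic frequency $d_\mu(s) > 0$ by ergodicity, the natural ODE limit is $\dot u = D_\mu(Y^\beta u - u)$.

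For stability I would invoke Lemma~\ref{Lem:Contraction} directly: $Y^\beta$ fixes $\rho_d$ and strictly contracts in $L_2$ on the orthogonal complement of $\rho_d$. Since $D_\mu$ is positive diagonal, this translates to global asymptotic stability of $\rho_d$ for the ODE, and the compact convex set $\Delta_{d_\mu}$ (which contains $\rho_d$) provides the uniform boundedness required by projected SA theory without altering the equilibrium.

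The hard part will be noise control, because $M_t$ involves the potentially unbounded product $F_t$. I would bound $\mathbb{E}[(F_t^\top \widehat{\rho_d}_{,t}/n(\beta))^2 \mid s_t]$ by a constant, using the hypothesis $\mathbb{E}[(\beta^n\Gamma_t^n)^2 \mid s_t] \le C$ together with the fact that $\widehat{\rho_d}_{,t}$ stays in $\Delta_{d_\mu}$ and is therefore uniformly bounded. A Cauchy-Schwarz split over the summation in $n$ reduces the square to a weighted sum of these second moments, each at most $C$, keeping the total finite. Combined with the step-size conditions $\sum_t \alpha_t^d = \infty$ and $\sum_t(\alpha_t^d)^2 < \infty$, the asynchronous/projected SA convergence theorem then yields $\widehat{\rho_d}_{,t} \to \rho_d$ almost surely; the residual finite-$t$ bias $\epsilon_t$ from $n(\beta)$ being only an approximation to $(1-\beta)^{-1}$ decays exponentially in $t$ and is easily absorbed into the drift term.
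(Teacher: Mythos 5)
Your proposal follows essentially the same route as the paper: cast the update as projected stochastic approximation with drift $e_{s_t}(Y^\beta - I)\widehat{\rho_d}_{,t}$, use Lemma~\ref{Lem:Contraction} for stability of $\rho_d$, bound the martingale noise via the hypothesis $\mathbb{E}[(\beta^n\Gamma_t^n)^2\mid s_t]\le C$, and identify the unique equilibrium in $\Delta_{d_\mu}$. The only place you are lighter than the paper is the simplex projection: the paper explicitly rules out spurious boundary behavior by noting that $\rho_d(s)>0$ makes the unprojected dynamics repel from the boundary, so the projection is eventually the affine (Frechet differentiable) one — worth adding, but not a different approach.
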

	\begin{proof}
		We follow the notation from \cite{schuss2009stochastic}. We first specify the stochastic approximation using $h(x)$ and $M_{n+1}$:
		\begin{equation}
		\begin{split}
		h(\widehat{ \rho _d }_{,t}) =& \mathbb{E} \left[ \frac{F^\top_t \widehat{ \rho _d }_{,t}}{n(\beta)} |s_t \right] - \widehat{ \rho _d }_{,t}(s_t) = e_{s_t} Y^\beta \widehat{ \rho _d }_{,t} - e_{s_t} \widehat{ \rho _d }_{,t} = e_{s_t} ( Y^\beta - I) \widehat{ \rho _d }_{,t}, \\
		M_{n+1} =& \frac{F^\top_t \widehat{ \rho _d }_{,t}}{n(\beta)} - \mathbb{E} \left[ \frac{F^\top_t \widehat{ \rho _d }_{,t}}{n(\beta)} |s_t \right] = \frac{F^\top_t \widehat{ \rho _d }_{,t}}{n(\beta)} - e_{s_t} Y^\beta \widehat{ \rho _d }_{,t}.
		\end{split}
		\end{equation}
		Now there are several conditions that must follow - condition on the step sizes, conditions on the Martingale and conditions on the projection. If all of these are met than the process converegs to the fixed point of the projected operator $\rho_d$.
		
		The step size conditions follow by the theorem's assumption. Now we move on to the Martingale conditions. 
		
		Obviously $\mathbb{E} \left[M_{n+1} | s_t\right] = 0$. In order for $\mathbb{E} \left[ \| M_{n+1} \|^2 | s_t\right]$ to be bounded a.s., we use the assumption $\mathbb{E} \left[ (\beta^n \Gamma^n)^2 \right] \leq C$. Since $F_t$ is the leading factor in $\mathbb{E} \left[ \| M_{n+1} \|^2 | s_t\right]$ (the others are naturally bounded depending quadratically on $\widehat{\rho_d}$), and $F_t = \tilde{\Gamma}^\beta = (1-\beta) \sum_{n=0}^\infty \beta^n \tilde{\Gamma}^{n+1}_t$, the upper bound follows. 
		
		Now let's consider the projection where we follow the discussion in \cite{schuss2009stochastic}, Section 5.4. Notice that $h$ is Lipschitz and the eigenvalues around the fixed point are non-negative, therefore there's a stable invariant solution set. In addition, the projection is to a closed convex set $\Delta_{d_\mu}$, so $\widehat{\rho_{d,t}}$ is bounded. Hence, our goal is to show that the projection is Lipschitz and that we can ignore its non-smooth boundary.
		
		The projection to the simplex zeros some coordinates and decreases a constant from the other coordinates. If indeed it zeros some coordinates - we are at a problematic area of the space since the projection is not Frechet differentiable there (we are on the boundary of the set). However, because $\rho_d(s) > 0$ (Assumption 1), the unprojected ODE repels $\widehat{\rho_d}$ from these problematic boundaries, and we can assume that after enough the steps the projection is simply a projection to the affine subspace $\sum_s d_\mu (s) u(s) = 1$. In that case the projection is given by: $\Pi_{\Delta_{d_\mu}} u = (I - \frac{1}{\| d_\mu \|^2} d_\mu d_\mu^\top) (u - \textbf{1}) + \textbf{1}$ and its Frechet derivative is $\bar{\Pi}_{\Delta_{d_\mu}}u = (I - \frac{1}{\| d_\mu \|^2} d_\mu d_\mu^\top)u $. This derivative is Lipschitz continuous which means that its composition with $h$ is also Lipschitz . 
		
		Hence, the process converges to the solution set of $\bar{\Pi}_{\Delta_{d_\mu}}h(x)=0$ for $\Pi_{\Delta_{d_\mu}}x=x$. Under these constraints the only fixed point can be $\rho_d$ (intersection of the $c \cdot \rho_d$ line with the weighted simplex set $\Delta_{d_\mu})$.
		
	\end{proof}	
	Finally, treating the $\theta_t$ process assuming $\widehat{\rho_d}_{,t}$ already converged to $\rho_d$, leaves us with Lemma \ref{Lem:Main}. Since each process depends only on the previous ones, it is enough to show they converge independently as long as the step sizes satisfy the rate constraints. 
\end{proof}

\subsection{Proof of Theorem \ref{Thm:COP2}}	
\textit{If the step sizes hold $\sum_t \alpha_t = \sum_t \alpha^d_t = \infty, \sum_t (\alpha^2_t + (\alpha_t^d)^2) < \infty, \frac{\alpha_t}{\alpha^d_t} \rightarrow 0, t \alpha^d_t \rightarrow 0$, and $\mathbb{E} \left[ (\beta^n \Gamma_t^n)^2 | s_t \right] \leq C$ for some constant $C$ and every $t,n$, then after applying COP-TD($0$, $\beta$) with function approximation satisfying $\phi_\rho(s) \in \mathbb{R}_+^k$, $\widehat{ \rho_d }_{,t}$  converges to the fixed point of $\Pi_{\Delta_{\mathbb{E}_\mu [\phi_\rho] }} \Pi_{\phi_\rho} Y^\beta $ denoted by $\rho^\text{COP}_d$ almost surely, and if $\theta_t$ converges it is to the fixed point of $\Pi_{d_\mu \circ \rho^\text{COP}_d} T_\pi V$. }
\begin{proof}
	Similarly to the proof of Theorem , we can analyze the system on 3-time scales. The fastest process is $\hat{d}_{\phi_\rho}$ which converges naturally with time-step $O(\frac{1}{t})$: 
	
	\begin{equation}
	\hat{d}_{\phi_\rho, t+1} = \frac{1}{t+1}\sum_{k=0}^{t}\phi_\rho(s_k) = \frac{1}{t+1} (t \hat{d}_{\phi_\rho, t}  + \phi_\rho(s_k)) = \hat{d}_{\phi_\rho, t} + \frac{1}{t+1} (  \phi_\rho(s_k) - \hat{d}_{\phi_\rho, t}).
	\end{equation}
	The process $\hat{d}_{\phi_\rho}$ converges almost surely to $\mathbb{E}_\mu [\phi_\rho(s)]$ by the strong law of large numbers. 
	
	We now show that $\widehat{ \rho_d }_{,t}$ converges to $\rho^\text{COP}_d$. The proof follows the same lines as that of Theorem \ref{Thm:COP1}, where two things changed: (a) The estimated value $\widehat{ \rho_d }_{,t}$ is now contained in a linear subspace spanned by $\Phi_\rho$, and (b) the projection changed as well from the $d_\mu$ simplex to the $\mathbb{E}_\mu [\phi_\rho(s)]$ simplex. 
	
	First we represent the corresponding $A$ and $b$ of the projected  $\rho_d$ ODE as follows (we assume $\beta=0$, but the results are similar for general $\beta$):
	\begin{equation}
	A = \Phi_\rho^\top D_\mu (D^{-1}_\mu P^\top_\pi D_\mu  - I) \Phi_\rho = \Phi_\rho^\top (P^\top_\pi  - I) D_\mu \Phi_\rho, \quad\quad\quad b = 0.
	\end{equation}
	We can now verify that a solution to $Ax=b$ also holds the Projected COP equation: $\Pi^{\phi_\rho}_{d_\mu} Y \Phi_\rho \theta_\rho = \Phi \theta_\rho$ by multiplying it from the left by $\Phi_\rho^\top D_\mu$:
	\begin{equation}
	\begin{split}
	\Phi_\rho^\top D_\mu \left[ \Pi^{\phi_\rho}_{d_\mu} Y \Phi_\rho \theta_\rho - \Phi_\rho \theta_\rho \right] &= \Phi_\rho^\top D_\mu \left[ \Phi_\rho  \left( \Phi^\top_\rho D_\mu \Phi_\rho \right)^{-1} \Phi^\top_\rho   D_\mu \left(D^{-1}_\mu P_\pi D_\mu \right) \Phi_\rho \theta_\rho - \Phi_\rho \theta_\rho \right] \\
	&= \Phi^\top_\rho P_\pi D_\mu \Phi_\rho \theta_\rho - \Phi^\top_\rho D_\mu \Phi_\rho \theta_\rho \\
	&= \Phi_\rho^\top (P_\pi  - I) D_\mu \Phi_\rho
	\end{split}
	\end{equation}
	Let's look on the new projection $\Pi_{\Delta_{\mathbb{E}_\mu [\phi_\rho] }}$. Since we demanded $\phi_\rho(s) \in \mathbb{R}_+^k$, this set is close and bounded, so the convergence is guaranteed. 
	
	In order for the new projection $\Pi_{\Delta_{\mathbb{E}_\mu [\phi_\rho] }}$ to be Frechet differentiable, we should verify it still avoids the boundaries meaning $\theta_\rho > 0$ coordinate-wise. However, even were this not true, we could simply throw away one of the features and get a smaller problem that does hold this condition, keeping the projection Frechet differentiable similarly to before. Subsequently $\widehat{ \rho_d }_{,t}$  converges to the fixed point of $\Pi_{\Delta_{\mathbb{E}_\mu [\phi_\rho] }} \Pi_{\phi_\rho} Y^\beta $ denoted by $\rho^\text{COP}_d$ almost surely.
	
	Moving on to the last process, we get the following equations:
	\begin{equation}
		\begin{split}
			A & = \lim\limits_{t\rightarrow \infty} \mathbb{E}_\mu \left[ \rho_t\rho^\text{COP}_d(s_t)\phi_t(\phi_t - \gamma \phi_{t+1})^\top \right]  = \Phi^\top D_\mu \text{diag}(\rho_d^\text{COP}) (I - \gamma P_\pi) \Phi, \\
			b &= \lim\limits_{t\rightarrow \infty} \mathbb{E}_\mu \left[ \rho_t\rho_d^\text{COP}(s_t)\phi_t r_t ^\top | s_k = s \right] = \Phi^\top D_\pi \text{diag}(\rho_d^\text{COP}) R_\pi,			
		\end{split}
	\end{equation}
	leading us to the known convergence solution (if indeed the process converge, which is not necessarily true) which is the fixed point of $\Pi_{d_\mu \circ \rho^\text{COP}_d} T_\pi V$.
	
\end{proof}
\subsection{Proof of Corollary 1}	
\textit{Let $0<\epsilon<1$. If $ (1-\epsilon) \rho_d \leq \rho_d^\text{COP} \leq (1+\epsilon) \rho_d$, then the fixed point of COP-TD($0$,$\beta$)  with function approximation $\theta^\text{COP}$ satisfies the following, where $\| \cdot \|_\infty$ is the $L_\infty$ induced norm:
	\begin{equation}
	\| \theta^* - \theta^\text{COP} \|_\infty \leq \epsilon \| A_\pi^{-1} \Phi^\top \| _\infty \left(   R_\text{max}  +  (1+\gamma) \| \Phi \|_\infty \| \theta^\text{COP} \|_\infty \right),
	\end{equation}
	where $A_\pi = \Phi^\top D_\pi (I - \gamma P_\pi) \Phi$, and $\theta^*$ sets the fixed point of the operator $\Pi_{d_\pi} T_\pi V$.}
\begin{proof}
	If $ (1-\epsilon) \rho_d \leq \rho_d^\text{COP} \leq (1+\epsilon) \rho_d$, then we know that the weights of the projection hold:
	\begin{equation}
	(1-\epsilon)d_\pi \leq d_{\tilde{\pi}} \doteq d_\mu \circ \rho^\text{COP}_d  \leq (1-\epsilon)d_\pi.
	\end{equation}
	
	We write the solution equations for both weight vectors
	\begin{equation}
	\begin{split}
	(\Phi^\top D_\pi (I - \gamma P_\pi) \Phi) \theta^* = \Phi^\top D_\pi R \\
	(\Phi^\top D_{\tilde{\pi}} (I - \gamma P_\pi) \Phi) \theta^\text{COP} = \Phi^\top D_{\tilde{\pi}} R
	\end{split}
	\end{equation}
	
	Now we subtract both equations and add and subtract $(\Phi^\top D_\pi (I - \gamma P_\pi) \Phi) \theta^\text{COP}$:
	\begin{equation}
	(\Phi^\top D_\pi (I - \gamma P_\pi) \Phi) (\theta^* - \theta^{\text{COP}}) = \Phi^\top (D_\pi - D_{\tilde{\pi}}) R + (\Phi^\top (D_{\tilde{\pi}} - D_\pi) (I - \gamma P_\pi) \Phi) \theta^\text{COP}
	\end{equation}
	Now we take $L_\infty$ norm on both sides, and use induced matrix sub-multiplicative property:
	\begin{equation}
	\begin{split}
	\| \theta^* - \theta^\text{COP} \|_\infty =& \| (\Phi^\top D_\pi (I - \gamma P_\pi) \Phi)^{-1} \left(\Phi^\top (D_\pi - D_{\tilde{\pi}}) R + (\Phi^\top (D_{\tilde{\pi}} - D_\pi) (I - \gamma P_\pi) \Phi) \theta^\text{COP} \right) \|_\infty \\
	\leq &\|  (\Phi^\top D_\pi (I - \gamma P_\pi) \Phi)^{-1}  \|\Phi^\top\|_\infty \left( (D_\pi - D_{\tilde{\pi}}) R\|_\infty + \| ( (D_{\tilde{\pi}} - D_\pi) (I - \gamma P_\pi) \Phi) \theta^\text{COP} \|_\infty \right) \\
	\leq& \| A_\pi^{-1}\Phi^\top  \|_\infty \left( \| D_\pi - D_{\tilde{\pi}} \|_\infty \| R\|_\infty +  \| D_\pi - D_{\tilde{\pi}} \|_\infty \|I - \gamma P_\pi \|_\infty \| \Phi \|_\infty \| \theta^\text{COP} \|_\infty \right) \\
	\leq& \epsilon \| A_\pi^{-1}\Phi^\top \| _\infty \left(  R_\text{max}  +  (1+\gamma) \| \Phi \|_\infty \| \theta^\text{COP} \|_\infty \right)
	\end{split}
	\end{equation}

\end{proof}

\subsection{More details on the experiments}
Experiments for Figure \ref{Fig:rho_d}: 100 states chain MDP, with probability $0.51$ to move left / right for the behavior / target policy. Results were taken after $T=1e6$ iterations. For COP-TD we used $\beta=0$ and a constant step size $0.5$. For Log-COP-TD we used $\beta=0, \gamma_{\log} = 0.9999$ and constant step size $0.5$. The experiment was conducted 10 times and the standard deviation is given as shading in the graph. 

For the mountain car experiment we used the simulator given by \url{https://jamh-web.appspot.com/download.htm}. The state aggregation was obtained by running kmeans with $100$ clusters and taking the centers as representative states. The behavior policy was taken to be uniform over the 3 possible actions (-1, 0, 1), and the target policy chose these actions with probabilities (1/6,1/3,1/2) regardless of the state. COP-TD was applied with $\beta=0$ and constant step size $0.01$, and Log-COP-TD was applied with $\beta=0, \gamma_{\log} = 0.9$ and constant step size $0.01$. Both algorithms ran for $T=1e6$ iterations before the estimated $\rho_d$ was taken. 

Experiments for Figure \ref{Fig:lambda_beta}: 100 states chain MDP, with probability $0.51$ to move left / right for the behavior / target policy. For both algorithms constant step size $0.5$. For Log-COP-TD we used $\beta=0, \gamma_{\log} = 0.9999$ when sweeping on the other parameter. All experiments were conducted 10 times and we show the average result. 

In the randomized MDP with 32 states we used uniform distribution over transition probabilities for two possible actions, and the target policy had $p=0.75$ to choose one action where the behavior policy had $p=0.75$ to choose the other action. All experiments were conducted 10 times and we show the average result. 

Experiments for Figure \ref{Fig:comp} were obtained by running COP-TD, Log-COP-TD, ETD and GTD over 4 setups. The distinct parameters of each algorithm were swiped over to find the best value: COP-TD's step size and $\beta$, Log-COP-TD's steps size, $\beta$ and $\gamma_{\log}$, ETD's $\beta$ and GTD's step size. The step size of the main process and $\lambda$ were taken to be the same for all algorithms: step size = 0.05 and $\lambda=0$ (obtained also by sweeping over possible values). The simulators for acrobot and pole-balancing were taken from \url{https://jamh-web.appspot.com/download.htm}.

\end{document}